\definecolor{amber}{rgb}{1.0, 0.01, 0.5}
\newcommand{\kibitz}[2]{\ifnum\Comments=1{\color{#1}{#2}}\fi}
\newcommand{\E}{\mathbb{E}}
\theoremstyle{plain}
\newtheorem{theorem}{Theorem}[]
\newtheorem{corollary}[theorem]{Corollary}
\newtheorem{lemma}[theorem]{Lemma}
\newenvironment{prevproof}[2]{\noindent {\em {Proof of {#1}~\ref{#2}:}}}{$\hfill\qed$\vskip \belowdisplayskip}
\newtheorem{proposition}{Proposition}
\newtheorem{claim}{Claim}
\newcommand{\inner}[3]{\ensuremath{\langle {#1}, {#2} \rangle_{{#3}}}}
\newcommand{\innerab}[3]{\ensuremath{\inner{{#1}}{{#2}}{AM_{{#3}}^T}}}
\newcommand{\innerac}[3]{\ensuremath{\inner{{#1}}{{#2}}{A^TN_{{#3}}^T}}}
\newcommand{\norm}[2]{\ensuremath{\left|\left|{#1}\right|\right|^2_{{#2}}}}
\newcommand{\normab}[2]{\ensuremath{\norm{{#1}}{AM_{{#2}}^T}}}
\newcommand{\normac}[2]{\ensuremath{\norm{{#1}}{A^TN_{{#2}}^T}}}
\newcommand{\normlt}[1]{\ensuremath{\left|\left|{#1}\right|\right|^2_2}}
\newcommand{\x}[3]{\ensuremath{{#1}x_{t{#2}} - 2\eta{#1}Ay_{t{#2}} +
\eta{#1}Ay_{t{#3}}}}
\newcommand{\xto}{\ensuremath{\x{}{-1}{-2}}}
\newcommand{\xtt}{\ensuremath{\x{}{-2}{-3}}}
\newcommand{\xtao}{\ensuremath{\x{A^T}{-1}{-2}}}
\newcommand{\y}[3]{\ensuremath{{#1}y_{t{#2}} + 2\eta{#1}A^Tx_{t{#2}} -
\eta{#1}A^Tx_{t{#3}}}}
\newcommand{\yto}{\ensuremath{\y{}{-1}{-2}}}
\newcommand{\ytt}{\ensuremath{\y{}{-2}{-3}}}
\newcommand{\ytao}{\ensuremath{\y{A}{-1}{-2}}}
\def\[#1\]{\begin{align*}#1\end{align*}}
\def\(#1\){\ensuremath{\left(#1\right)}}
\begin{document}
% \nipsfinalcopy is no longer used

\title{Training GANs with Optimism}

% The \author macro works with any number of authors. There are two
% commands used to separate the names and addresses of multiple
% authors: \And and \AND.
%
% Using \And between authors leaves it to LaTeX to determine where to
% break the lines. Using \AND forces a line break at that point. So,
% if LaTeX puts 3 of 4 authors names on the first line, and the last
% on the second line, try using \AND instead of \And before the third
% author name.
\newcommand*\samethanks[1][\value{footnote}]{\footnotemark[#1]}
\author{Constantinos Daskalakis\thanks{These authors contribute equally to this work.}\\
MIT, EECS\\
\texttt{costis@mit.edu}\\
\And  
Andrew Ilyas\samethanks\\
MIT, EECS\\
\texttt{ailyas@mit.edu}\\
\And
Vasilis Syrgkanis\samethanks\\
Microsoft Research\\
\texttt{vasy@microsoft.com}\\
\AND	
Haoyang Zeng\samethanks\\
MIT, EECS\\
\texttt{haoyangz@mit.edu}}

\footnotetext[1]{Code for our models is available at \url{https://github.com/vsyrgkanis/optimistic_GAN_training}}

\date{\today}
\maketitle

\begin{abstract}
We address the issue of limit cycling behavior in training Generative Adversarial Networks and propose the use of Optimistic Mirror Decent (OMD) for training Wasserstein GANs. Recent theoretical results have shown that optimistic mirror decent (OMD) can enjoy faster regret rates in the context of zero-sum games. WGANs is exactly a context of solving a zero-sum game with simultaneous no-regret dynamics.  Moreover, we show that optimistic mirror decent addresses the limit cycling problem in training WGANs. We formally show that in the case of bi-linear zero-sum games the last iterate of OMD dynamics converges to an equilibrium, in contrast to GD dynamics which are bound to cycle. We also portray the huge qualitative difference between GD and OMD dynamics with toy examples, even when GD is modified with many adaptations proposed in the recent literature, such as gradient penalty or momentum. We apply OMD WGAN training to a bioinformatics problem of generating DNA sequences. We observe that models trained with OMD achieve consistently smaller KL divergence with respect to the true underlying distribution, than models trained with GD variants. Finally, we introduce a new algorithm, Optimistic Adam, which is an optimistic variant of Adam. We apply it to WGAN training on CIFAR10 and observe improved performance in terms of inception score as compared to Adam.
\end{abstract}

\section{Introduction}
Generative Adversarial Networks (GANs) \citep{Goodfellow14} have proven a very successful approach for fitting generative models in complex structured spaces, such as distributions over images. GANs frame the question of fitting a generative model from a data set of samples from some distribution as a zero-sum game between a Generator (G) and a discriminator (D). The Generator is represented as a deep neural network which takes as input random noise and outputs a sample in the same space of the sampled data set, trying to approximate a sample from the underlying distribution of data. The discriminator, also modeled as a deep neural network is attempting to discriminate between a true sample and a sample generated by the generator. The hope is that at the equilibrium of this zero-sum game the generator will learn to generate samples in a manner that is indistinguishable from the true samples and hence has essentially learned the underlying data distribution.

Despite their success at generating visually appealing samples when applied to image generation tasks, GANs are very finicky to train. One particular problem, raised for instance in a recent survey as a major issue \citep{Goodfellow17} is the instability of the training process. Typically training of GANs is achieved by solving the zero-sum game via running simultaneously a variant of a Stochastic Gradient Descent algorithm for both players (potentially training the discriminator more frequently than the generator).

The latter amounts essentially to solving the zero-sum game via running no-regret dynamics for each player. However, it is known from results in game theory, that no-regret dynamics in zero-sum games can very often lead to limit \vsedit{oscillatory behavior,} %cycles, 
rather than converge to an equilibrium. Even in convex-concave zero-sum games it is only the average of the weights of the two players that constitutes an equilibrium and not the last-iterate. In fact recent theoretical results of \cite{Piliouras} show the strong result that no variant of GD that falls in the large class of Follow-the-Regularized-Leader (FTRL) algorithms can converge to an equilibrium in terms of the last-iterate and are bound to converge to limit cycles around the equilibrium. 

Averaging the weights of neural nets is a prohibitive approach in particular because the zero-sum game that is defined by training one deep net against another is not a convex-concave zero-sum game. Thus it seems essential to identify training algorithms that make the last iterate of the training be very close to the equilibrium, rather than only the average.

\paragraph{Contributions.} In this paper we propose training GANs, and in particular Wasserstein GANs \cite{arjovsky2017wasserstein}, via a variant of gradient descent known as Optimistic Mirror Descent. Optimistic Mirror Descent (OMD) takes advantage of the fact that the opponent in a zero-sum game is also training via a similar algorithm and uses the predictability of the strategy of the opponent to achieve faster regret rates. It has been shown in the recent literature that Optimistic Mirror Descent and its generalization of Optimistic Follow-the-Regularized-Leader (OFTRL), achieve faster convergence rates than gradient descent in convex-concave zero-sum games \citep{Rakhlin, Rakhlin2013b} and even in general normal form games \citep{Syrgkanis}. Hence, even from the perspective of faster training, OMD should be preferred over GD due to its better worst-case guarantees and since it is a very small change over GD. 

Moreover, we prove the surprising theoretical result that for a large class of zero-sum games (namely bi-linear games), OMD actually converges to an equilibrium in terms of the last iterate. Hence, we give strong theoretical evidence that OMD can help in achieving the long sought-after stability and last-iterate convergence required for GAN training. The latter theoretical result is of independent interest, since solving zero-sum games via no-regret dynamics has found applications in many areas of machine learning, such as boosting \citep{Freund1996}. Avoiding limit cycles in such approaches could help improve the performance of the resulting solutions.

We complement our theoretical result with toy simulations that portray exactly the large qualitative difference between OMD as opposed to GD (and its many variants, including gradient penalty, momentum, adaptive step size etc.). We show that even in a simple distribution learning setting where the generator simply needs to learn the mean of a multi-variate distribution, GD leads to limit cycles, while OMD converges pointwise. 

Moreover, we give a more complex application to the problem of learning to generate distributions of DNA sequences of the same cellular function. DNA sequences that carry out the same function in the genome, such as binding to a specific transcription factor, follow the same nucleotide distribution.  Characterizing the DNA distribution of different cellular functions is essential for understanding the functional landscape of the human genome and predicting the clinical consequence of DNA mutations \citep{Zeng2015, Zeng2016, Zeng2017}. We perform a simulation study where we generate samples of DNA sequences from a known distribution. Subsequently we train a GAN to attempt to learn this underlying distribution. We show that OMD achieves consistently better performance than GD variants in terms of the Kullback-Leibler (KL) divergence between the distribution learned by the Generator and the true distribution. 

Finally, we apply optimism to training GANs for images and introduce the \emph{Optimistic Adam} algorithm. We show that it achieves better performance than Adam, in terms of inception score, when trained on CIFAR10.

\section{Preliminaries: WGANs and Optimistic Mirror Descent}
We consider the problem of learning a generative model of a distribution of data points $Q \in \Delta(X)$. Our goal is given a set of samples from $D$, to learn an approximation to the distribution $Q$ in the form of a deep neural network $G_{\theta}(\cdot)$, with weight parameters $\theta$, that takes as input random noise $z\in F$ (from some simple distribution $F$) and outputs a sample $G_{\theta}(z)\in X$. We will focus on addressing this problem via a Generative Adversarial Network (GAN) training strategy.

The GAN training strategy defines as a zero-sum game between a \emph{generator} deep neural network $G_{\theta}(\cdot)$ and a \emph{discriminator} neural network $D_{w}(\cdot)$. The generator takes as input random noise $z\sim F$, and outputs a sample $G_{\theta}(z)\in X$. A discriminator takes as input a sample $x$ (either drawn from the true distribution $Q$ or from the generator) and attempts to classify it as real or fake. The goal of the generator is to fool the discriminator.

In the original GAN training strategy \cite{Goodfellow14}, the discriminator of the zero sum game was formulated as a classifier, i.e. $D_w(x)\in [0,1]$ with a multinomial logistic loss. The latter boils down to the following expected zero-sum game (ignoring sampling noise). 
\begin{equation}
\inf_{\theta} \sup_{w} \E_{x\sim Q}\left[ \log(D_w(x))\right] + \E_{z\sim F}\left[\log(1- D_{w}(G_{\theta}(z)))\right]
\end{equation} 
If the discriminator is very powerful and learns to accurately classify all samples, then the problem of the generator amounts to minimizing the Jensen-Shannon divergence between the true distribution and the generators distribution. However, if the discriminator is very powerful, then in practice the latter leads to vainishing gradients for the generator and inability to train in a stable manner.

The latter problem lead to the formulation of Wasserstein GANs (WGANs) \cite{arjovsky2017wasserstein}, where the discriminator rather than being treated as a classifier (equiv. approximating the JS divergence) is instead trying to approximate the Wasserstein$-1$ or \emph{earth-mover} metric between the true distribution and the distribution of the generator. In this case, the function $D_w(x)$ is not constrained to being a probability in $[0,1]$ but rather is an arbitrary $1$-Lipschitz function of $x$. This reasoning leads to the following zero-sum game:
\begin{equation}\label{eqn:wgan}
\inf_{\theta} \sup_{w} \E_{x\sim Q}\left[ D_w(x) \right] - \E_{z\sim F}\left[D_w(G_\theta(z))\right]
\end{equation}
If the function space of the discriminator covers all $1$-Lipschitz functions of $x$, then the quantity $\sup_{w} \E_{x\sim D}\left[ D_w(x) \right] - \E_{z\sim F}\left[D_w(G_\theta(z))\right]$ that the generator is trying to minimize corresponds to the earth-mover distance between the true distribution $Q$ and the distribution of the generator. Given the success of WGANs we will focus on WGANs in this paper.

\subsection{Gradient Descent vs Optimistic Mirror Descent}

The standard approach to training WGANs is to train simultaneously the parameters of both networks via stochastic gradient descent. We begin by presenting the most basic version of adversarial training via stochastic gradient descent and then comment on the multiple variants that have been proposed in the literature in the following section, where we compare their performance with our proposed algorithm for a simple example.

Let us start how training a GAN with gradient descent would look like in the absence of sampling error, i.e. if we had access to the true distribution $Q$. For simplicity of notation, let:
\begin{equation}\label{eqn:WGAN-loss}
L(\theta, w) = \E_{x\sim Q}\left[ D_w(x) \right] - \E_{z\sim F}\left[D_w(G_\theta(z))\right]
\end{equation}
denote the loss in the expected zero-sum game of WGAN, as defined in Equation \eqref{eqn:wgan}, i.e. $\inf_{\theta}\sup_{w} L(\theta,w)$. The classic WGAN approach is to solve this game by running gradient descent (GD) for each player, i.e. for $t\in \{1,\ldots, T-1\}$: with $\nabla_{w,t}=\nabla_w L(\theta_t, w_t)$ and $\nabla_{\theta, t} =\nabla_{\theta} L(\theta_t, w_t)$
\begin{equation}\label{eqn:GD}
\begin{aligned}
w_{t+1} =~& w_t + \eta \cdot \nabla_{w,t}\\
\theta_{t+1} =~& \theta_t - \eta \cdot \nabla_{\theta,t}
\end{aligned}
\end{equation}
If the loss function $L(\theta, w)$ was convex in $\theta$ and concave $w$, \vsedit{$\theta$ and $w$ lie in some bounded convex set} and the step size $\eta$ is chosen of the order $\frac{1}{\sqrt{T}}$, then standard results in game theory and no-regret learning (see e.g. \cite{Freund1999}) imply that the pair $(\bar{\theta}, \bar{w})$ of average parameters, i.e. $\bar{w}=\frac{1}{T} \sum_{t=1}^T w_t$ and $\bar{\theta} = \frac{1}{T} \sum_{t=1}^T \theta_t$ is an $\epsilon$-equilibrium of the zero-sum game, for $\epsilon = O\left(\frac{1}{\sqrt{T}}\right)$. However, no guarantees are known beyond the convex-concave setting and, more importantly for the paper, even in convex-concave games, no guarantees are known for the last-iterate pair $(\theta_T, w_T)$.

Rakhlin and Sridharan \citep{Rakhlin} proposed an alternative algorithm for solving zero-sum games in a decentralized manner, namely  \emph{Optimistic Mirror Descent} (OMD), that achieves faster convergence rates to equilibrium of $\epsilon = O\left(\frac{1}{T}\right)$ for the average of parameters. The algorithm essentially uses the last iterations gradient as a predictor for the next iteration's gradient. This follows from the intuition that if the opponent in the game is using a stable (or regularized) algorithm, then the gradients between the two iterations will not change much. Later \cite{Syrgkanis} showed that this intuition extends to show faster convergence of each individual player's regret in general normal form games. 

Given these favorable properties of OMD when learning in games, we propose replacing GD with OMD when training WGANs. The update rule of a OMD is a small adaptation to GD. OMD is parameterized by a predictor of the next iteration's gradient which could either be simply last iteration's gradient or an average of a window of last gradient or a discounted average of past gradients. In the case where the predictor is simply the last iteration gradient, then the update rule for OMD boils down to the following simple form:
\begin{equation}\label{eqn:omd}
\begin{aligned}
w_{t+1} =~& w_t + 2\eta \cdot \nabla_{w, t} - \eta \cdot \nabla_{w, t-1} \\
\theta_{t+1} =~& \theta_t - 2 \eta \cdot \nabla_{\theta. t}
+ \eta \cdot \nabla_{\theta, t-1}
\end{aligned}
\end{equation}
The simple modification in the GD update rule, is inherently different than any of the existing adaptations used in GAN training, such as Nesterov's momentum, or gradient penalty.

\paragraph{General OMD and intuition.} The intuition behind OMD can be more easily understood when GD is viewed through the lens of the Follow-the-Regularized-Leader formulation. In particular, it is well known that GD is equivalent to the Follow-the-Regularized-Leader algorithm with an $\ell_2$ regularizer (see e.g. \cite{Shalev2012}), i.e.:
\begin{equation}\label{eqn:ftrl-gen}
\begin{aligned}
w_{t+1} =& \arg\max_{w} \eta \sum_{\tau=1}^t \langle w, \nabla_{w, \tau}\rangle - \|w\|_2^2\\
\theta_{t+1} =& \arg\min_{\theta} \eta \sum_{\tau=1}^t \langle \theta, \nabla_{\theta, \tau}\rangle + \|\theta\|_2^2
\end{aligned}
\end{equation}
It is known that if the learner knew in advance the gradient at the next iteration, then by adding that to the above optimization would lead to constant regret that comes solely from the regularization term\footnote{The latter is a consequence of the be-the-leader lemma \cite{Kalai2005,Rigollet2015}}. OMD essentially augments FTRL by adding a predictor $M_{t+1}$ of the next iterations gradient, i.e.:
\begin{equation}\label{eqn:oftrl-gen}
\begin{aligned}
w_{t+1} =& \arg\max_{w}~ \eta \left(\sum_{\tau=1}^t \langle w, \nabla_{w,\tau}\rangle + \langle w, M_{w, t+1}\rangle \right) - \|w\|_2^2\\
\theta_{t+1} =& \arg\min_{\theta}~ \eta \left(\sum_{\tau=1}^t \langle \theta, \nabla_{\theta, \tau}\rangle + \langle \theta, M_{\theta, t+1}\rangle \right)+ \|\theta\|_2^2
\end{aligned}
\end{equation}
For an arbitrary set of predictors, the latter boils down to the following set of update rules:
\begin{equation}\label{eqn:omd-gen}
\begin{aligned}
w_{t+1} =~& w_t + \eta \cdot \left(\nabla_{w, t} + M_{w, t+1} - M_{w, t}\right)\\
\theta_{t+1} =~& \theta_t - \eta \cdot \left(\nabla_{\theta, t} + M_{\theta, t+1} - M_{\theta, t}\right)
\end{aligned}
\end{equation}
In the theoretical part of the paper we will focus on the case where the predictor is simply the last iteration gradient, leading to update rules in Equation \eqref{eqn:omd}. In the experimental section we will also explore performance of other alternatives for predictors.

\subsection{Stochastic Gradient Descent and Stochastic Optimistic Mirror Descent}

In practice we don't really have access to the true distribution $Q$ and hence we replace $Q$ with an empirical distribution $Q_n$ over samples $\{x_1,\ldots, x_n\}$ and $F_n$ of random noise samples $\{z_1,\ldots, z_n\}$, leading to empirical loss for the zero-sum game of:
\begin{equation}
L_n(\theta, w) = \E_{x\sim Q_n}\left[ D_w(x) \right] - \E_{z\sim F_n}\left[D_w(G_\theta(z))\right]
\end{equation}
Even in this setting it might be impractical to compute the gradient of the expected loss with respect to $Q_n$ or $F_n$, e.g. $\E_{x\sim Q_n}\left[ \nabla_w D_w(x)\right]$.

However, GD and OMD still leads to small loss if we replace gradients with unbiased estimators of them. Hence, we can replace expectation with respect to $Q_n$ or $F_n$, by simply evaluating the gradients at a single sample or on a small batch of $B$ samples. Hence, we can replace the gradients at each iteration with the variants:
\begin{equation}\label{eqn:omd-gen}
\begin{aligned}
\hat{\nabla}_{w, t} =~& \frac{1}{|B|} \sum_{i\in B}\left( \nabla_w D_{w_t}(x_i) - \nabla_{w} D_{w_t}(G_{\theta_t}(z_i))\right)\\
\hat{\nabla}_{\theta, t}  =~& - \frac{1}{|B|}\sum_{i\in B} \nabla_{\theta} (D_{w_t}(G_{\theta_t}(z_i)))
\end{aligned}
\end{equation}
Replacing $\nabla_{w,t}$ and $\nabla_{\theta,t}$ with the above estimates in Equation \eqref{eqn:GD} and \eqref{eqn:omd}, leads to Stochastic Gradient Descent (SGD) and Stochastic Optimistic Mirror Decent (SOMD) correspondingly.

\section{An Illustrative Example: Learning the Mean of a Distribution}\label{sec:illustrative}

We consider the following very simple WGAN example: The data are generated by a multivariate normal distribution, i.e. $Q \triangleq N(v, I)$ for some $v\in \mathbb{R}^d$. The goal is for the generator to learn the unknown parameter $v$. In  Appendix \ref{sec:covariance} we also consider a more complex example where the generator is trying to learn a co-variance matrix. 

We consider a WGAN, where the discriminator is a linear function and the generator is a simple additive displacement of the input noise $z$, which is drawn from $F\triangleq N(0, I)$, i.e:
\begin{equation}
\begin{aligned}
D_w(x) =~& \langle w, x\rangle\\
G_{\theta}(z) =~& z + \theta
\end{aligned}
\end{equation}
The goal of the generator is to figure out the true distribution, i.e. to converge to $\theta = v$. The WGAN loss then takes the simple form:
\begin{equation}
L(\theta, w) =  \mathbb{E}_{x\sim N(v, I)}\left[ \langle w, x \rangle \right] - \mathbb{E}_{z\sim N(0,I)}\left[\langle w, z + \theta \rangle\right] 
\end{equation}
We first consider the case where we optimize the true expectations above rather than assuming that we only get samples of $x$ and samples of $z$. Due to linearity of expectation, the expected zero-sum game takes the form:
\begin{equation}
\inf_{\theta} \sup_{w}~\langle w, v-\theta \rangle
\end{equation}
We see here that the unique equilibrium of the above game is for the generator to choose $\theta=v$ and for the discriminator to choose $w = 0$. For this simple zero sum game, we have $\nabla_{w, t}=v-\theta_t$ and $\nabla_{\theta, t}=-w_t$. Hence, the GD dynamics take the form:
\begin{equation}
\begin{aligned}
w_{t+1} =& w_{t} + \eta (v - \theta_{t})\\
\theta_{t+1} =& \theta_{t} + \eta w_t  
\end{aligned}\tag{GD Dynamics for Learning Means}
\end{equation}
while the OMD dynamics take the form:
\begin{equation}
\begin{aligned}
w_{t+1} =& w_{t} + 2\eta\cdot (v - \theta_{t}) - \eta \cdot (v-\theta_{t-1})\\
\theta_{t+1} =& \theta_{t} + 2\eta\cdot w_t - \eta\cdot  w_{t-1} 
\end{aligned}\tag{OMD Dynamics for Learning Means}
\end{equation}

We simulated simultaneous training in this zero-sum game under the GD and under OMD dynamics and we find that GD dynamics always lead to a limit cycle irrespective of the step size or other modifications. In Figure \ref{fig:gd} we present the behavior of the GD vs OMD dynamics in this game for $v = (3, 4)$. We see that even though GD dynamics leads to a limit cycle (whose average does indeed equal to the true vector), the OMD dynamics converge to $v$ in terms of the last iterate. In Figure \ref{fig:sampling} we see that the stability of OMD even carries over to the case of Stochastic Gradients, as long as the batch size is of decent size. 

\begin{figure}[htpb]
    \centering
    \begin{subfigure}[b]{.49\textwidth}
        \centering
        \includegraphics[height=.7in]{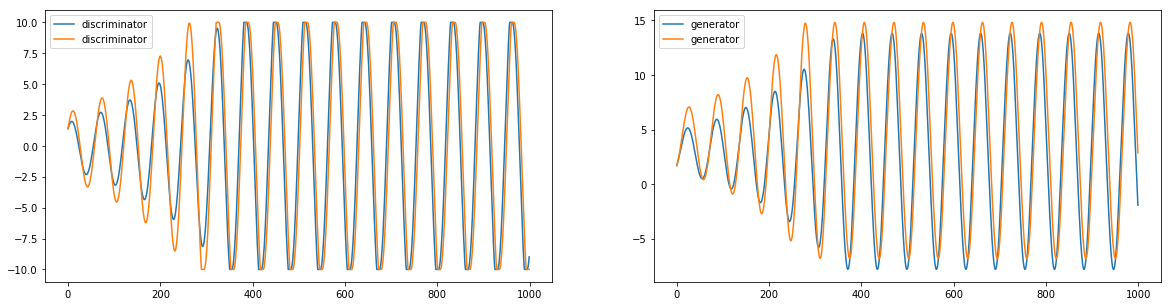}
        \caption{GD dynamics.}
    \end{subfigure}
    ~ 
    \begin{subfigure}[b]{.49\textwidth}
        \centering
        \includegraphics[height=.7in]{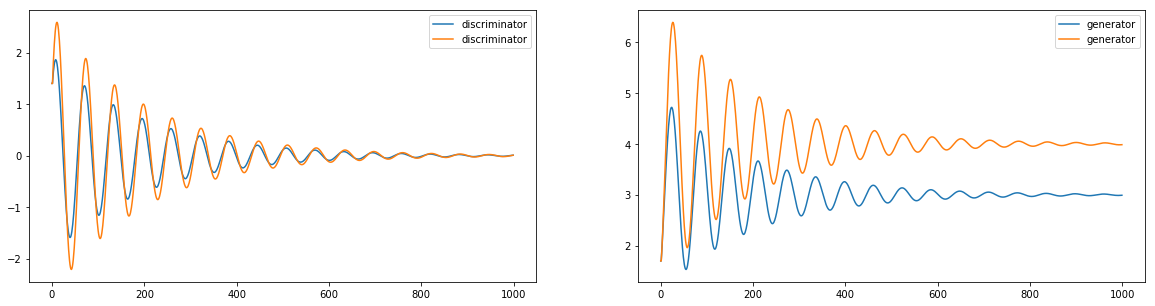}
        \caption{OMD dynamics.}
    \end{subfigure}
    \caption{Training GAN with GD converges to a limit cycle that oscilates around the equilibrium (we applied weight-clipping at $10$ for the discriminator). On the contrary training with OMD converges to equilibrium in terms of last-iterate convergence.}\label{fig:gd}
\end{figure}

In the appendix we also portray the behavior of the GD dynamics even when we add gradient penalty \citep{Gulrajani2017} to the game loss (instead of weight clipping), adding Nesterov momentum to the GD update rule \citep{Nesterov} or when we train the discriminator multiple times in between a train iteration of the generator. We see that even though these modifications do improve the stability of the GD dynamics, in the sense that they narrow the band of the limit cycle, they still lead to a non-vanishing limit cycle, unlike OMD. 

\begin{figure}[htpb]
    \centering
    \begin{subfigure}[b]{.49\textwidth}
        \centering
        \includegraphics[height=.7in]{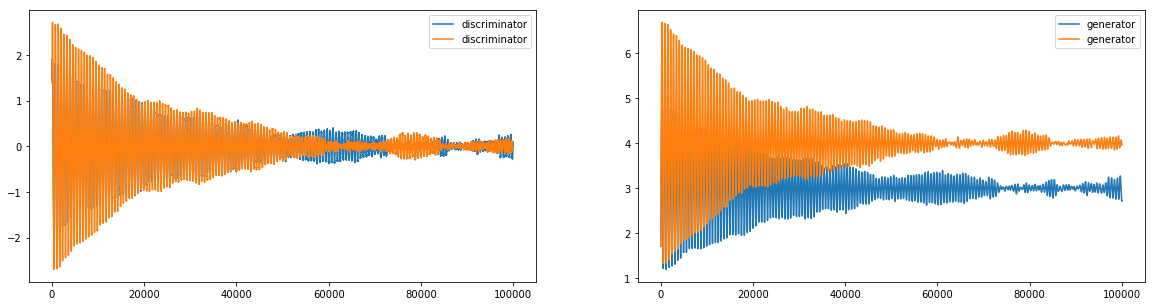}
        \caption{Stochastic OMD dynamics with mini-batch of $50$.}
    \end{subfigure}
    ~ 
    \begin{subfigure}[b]{.49\textwidth}
        \centering
        \includegraphics[height=.7in]{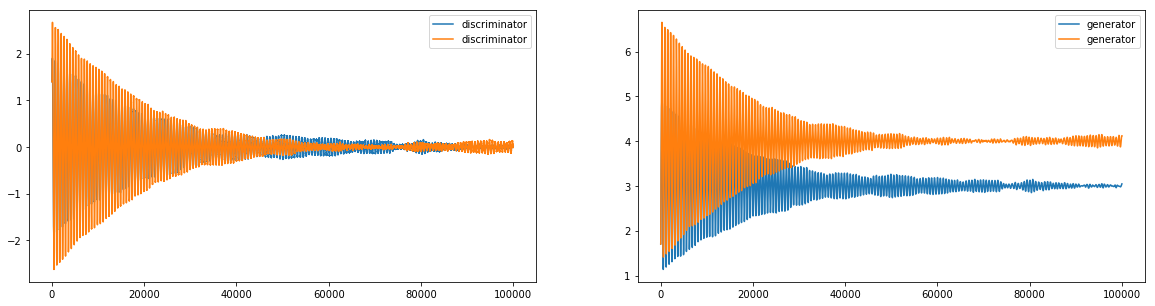}
        \caption{Stochastic OMD dynamics with mini-batch of $200$.}
    \end{subfigure}
    \caption{Robustness of last-iterate convergence of OMD to stochastic gradients.}\label{fig:sampling}
\end{figure}

In the next section, we will in fact prove formally that for a large class of zero-sum games including the one presented in this section, OMD dynamics converge to  equilibrium in the sense of last-iterate convergence, as opposed to average-iterate convergence.

\section{Last-Iterate Convergence of Optimistic Adversarial Training}
\label{sec:main:proof OMD converges}
In this section, we show that Optimistic Mirror Descent  
exhibits final-iterate, rather than only average-iterate convergence to min-max solutions for
bilinear functions. 
%In particular, we show that the $\ell_2$ norms of the gradients used by the dynamics shrinks in time.
More precisely, we consider the problem $\min_x \max_y x^T A y$, for some matrix $A$, where $x$ and $y$ are unconstrained. In Appendix \ref{sec:appendix:last-iterate}, we also show that our convergence result appropriately extends to the general case, where the bi-linear game
also contains terms that are linear in the players' individual strategies, i.e.~games of the form:
\begin{equation}
    \inf_{x} \sup_{y} \left(x^TAy + b^Tx + c^Ty + d\right). \label{eq:inf sup problem general}
\end{equation}

In the simpler $\min_{x}\max_{y} x^T Ay$ problem, Optimistic Mirror Descent takes the following form, for all $t \ge 1$:
\begin{align}
    x_{t} &= \xto \label{eq:OGD bilinear x repeat}\\
    y_{t} &= \yto  \label{eq:OGD bilinear y repeat}
\end{align}
\noindent {\em Initialization:} For the above iteration to be meaningful we need to specify $x_0,x_{-1},y_0,y_{-1}$. We choose any $x_0 \in
\mathcal{R}(A)$, and $y_0\in\mathcal{R}(A^T)$, and set $x_{-1}=2x_0$ and $y_{-1}=2y_{0}$, where ${\cal R}(\cdot)$ represents the column space of $A$. In particular, our initialization means that the first step taken by the dynamics gives $x_1=x_0$ and $y_1=y_0$.

\smallskip We will analyze Optimistic Mirror Descent under the assumption $\lambda_{\infty} \le 1$, where $\lambda_{\infty}=\max\{||A||,||A^T||\}$ and $||\cdot||$ denotes spectral norm of matrices. We can always enforce that $\lambda_{\infty} \le 1$ by appropriately scaling $A$. Scaling $A$ by some positive factor clearly does not change the min-max solutions $(x^*,y^*)$, only scales the optimal value $x^{*T}Ay^*$ by the same factor.

We remark that the set of equilibrium solutions of this minimax problem  are pairs $(x,y)$ such that $x$ is in the null space of $A^T$ and $y$ is in the null space of $A$. 
%In particular, finding a solution to $\min_x \max_y x^T A y$ is a trivial problem. So 
In this section we rigorously show that Optimistic Mirror Descent converges to the set of such min-max solutions. This is interesting in light of the fact that Gradient Descent actually diverges, even in the special case where $A$ is the identity matrix, as per the following proposition whose proof is provided in Appendix~\ref{appendix:omitted proofs}.

\begin{proposition} \label{prop:gradient descent unstable}
Gradient descent applied to the problem $\min_x \max_y x^T y$ diverges starting from any initialization $x_0, y_0$ such that $x_0,y_0 \neq 0$.
\end{proposition}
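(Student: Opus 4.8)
The plan is to exhibit an explicit potential quantity that grows geometrically under the gradient-descent dynamics, which immediately rules out convergence to the equilibrium. First I would write the dynamics explicitly for the loss $L(x,y) = x^Ty$. Here $x$ is the minimizing player and $y$ is the maximizing player, so specializing the GD update rules \eqref{eqn:GD} with $\nabla_{x,t} = y_t$ and $\nabla_{y,t} = x_t$ gives
\begin{align*}
x_{t+1} &= x_t - \eta\, y_t,\\
y_{t+1} &= y_t + \eta\, x_t.
\end{align*}

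Next I would track the squared Euclidean norm of the joint state, $\Phi_t = \|x_t\|^2 + \|y_t\|^2$. Expanding each of the two updates and summing,
\begin{align*}
\Phi_{t+1} &= \|x_t - \eta y_t\|^2 + \|y_t + \eta x_t\|^2\\
&= \big(\|x_t\|^2 - 2\eta\langle x_t, y_t\rangle + \eta^2\|y_t\|^2\big) + \big(\|y_t\|^2 + 2\eta\langle x_t, y_t\rangle + \eta^2\|x_t\|^2\big).
\end{align*}
The crucial observation is that the two cross terms $\mp 2\eta\langle x_t, y_t\rangle$ cancel exactly; this cancellation is a manifestation of the antisymmetric, rotational structure of the bilinear game, and is precisely the mechanism that prevents the iterates from contracting toward the origin. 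Collecting the surviving terms therefore yields the clean recursion $\Phi_{t+1} = (1+\eta^2)\,\Phi_t$.

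Iterating gives $\Phi_t = (1+\eta^2)^t\,\Phi_0$. Since the step size satisfies $\eta > 0$ we have $1 + \eta^2 > 1$, and since the hypothesis that $x_0,y_0$ are not both zero guarantees $\Phi_0 = \|x_0\|^2 + \|y_0\|^2 > 0$, the potential, and hence $\|x_t\|^2 + \|y_t\|^2$, grows to infinity at a geometric rate. In particular the iterates cannot approach the unique equilibrium $(0,0)$, which establishes divergence and completes the argument.

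I do not anticipate a genuine obstacle: the whole proof reduces to the single algebraic identity that the inner-product terms cancel, a short direct computation. The only point demanding care is the sign bookkeeping in the two update rules (ascent for $y$, descent for $x$), since flipping either sign would spuriously turn the expanding potential into a contracting one; once the signs are pinned down, the factor $(1+\eta^2)>1$ forces geometric growth regardless of the initialization or the value of $\eta$.
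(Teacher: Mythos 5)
Your proof is correct and is essentially identical to the paper's own argument: the paper also tracks the squared distance $d(t) = \|x_t\|_2^2 + \|y_t\|_2^2$, observes the exact cancellation of the cross terms, and concludes $d(t) = (1+\eta^2)\,d(t-1)$, hence geometric divergence. (As a minor bonus, your version makes explicit that the argument only needs $(x_0,y_0)\neq(0,0)$ rather than both components nonzero.)
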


Next, we state our main result of this section, whose proof can be found in Appendix~\ref{sec:appendix:last-iterate}, where we also state its appropriate generalization to the general case~\eqref{eq:inf sup problem general}.

\begin{theorem}[Last Iterate Convergence of OMD]\label{thm:convergence of OGD-main}
Consider the dynamics of Eq.~\eqref{eq:OGD bilinear x repeat} and~\eqref{eq:OGD bilinear y repeat} and any initialization ${1 \over 2}x_{-1}=x_0 \in
\mathcal{R}(A)$, and ${1 \over 2}y_{-1}=y_0\in\mathcal{R}(A^T)$. Let also
$$\gamma = \vsedit{\left\|\(AA^T\)^{+}\right\|},$$
%\max\(\left|\left|\(AA^T\)^{+}\right|\right|,
%	    \left|\left|\(A^TA\)^{+}\right|\right|\),$$
where for a matrix $X$ we denote by	$X^{+}$ its generalized inverse and by $||X||$ its spectral norm. Suppose that \vsedit{$\|A\|\equiv \lambda_{\infty}\le 1$} 
%$\max\{||A||,||A^T||\}\equiv \lambda_{\infty}\le 1$ 
and that $\eta$ is a small enough constant satisfying $\eta <1/(3\gamma^2)$. Letting $\Delta_t = \normlt{A^Tx_t} + \normlt{Ay_t}$, the OMD dynamics satisfy the following:
\begin{align*}
&\Delta_1=\Delta_0 \ge {1 \over (1+\eta)^2} \Delta_2\\
    \forall t\ge 3:~~&\Delta_t \leq \left(1-{\eta^2 \over \gamma^2}\right)\Delta_{t-1} + 16\eta^3 \Delta_0. 
\end{align*}
In particular, $\Delta_t \rightarrow O(\eta \gamma^2 \Delta_0)$, as $t \rightarrow +\infty$, and for large enough $t$, the last iterate of OMD is within $O(\sqrt{\eta} \cdot \gamma \sqrt{\Delta_0})$ distance from the space of equilibrium points of the game, where $\sqrt{\Delta_0}$ is the distance of the initial point $(x_0,y_0)$ from the equilibrium space, and where both distances are taken with respect to the norm $\sqrt{x^T A A^T x + y^T A^T A y}$.
\end{theorem}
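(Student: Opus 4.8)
The plan is to recast the two coupled updates as a single second-order linear recursion driven by a \emph{skew-symmetric} operator, and then to analyze the Lyapunov quantity $\Delta_t$ directly, using skew-symmetry to neutralize the only term that threatens contraction. Stacking the iterates as $z_t=(x_t;y_t)$ and setting $B=\left(\begin{smallmatrix}0 & -A\\ A^T & 0\end{smallmatrix}\right)$, the dynamics of Eq.~\eqref{eq:OGD bilinear x repeat}--\eqref{eq:OGD bilinear y repeat} become $z_t=(I+2\eta B)z_{t-1}-\eta B z_{t-2}$. Here $B^T=-B$ and $B^TB=\mathrm{diag}(AA^T,A^TA)$, so $\Delta_t=\|A^Tx_t\|^2+\|Ay_t\|^2=z_t^TB^TBz_t=\|Bz_t\|^2$. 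The first observation is that $p_t:=Bz_t$ obeys the \emph{same} recursion $p_t=(I+2\eta B)p_{t-1}-\eta Bp_{t-2}$, so that $\Delta_t=\|p_t\|^2$, and that the base cases are essentially free: the initialization forces $z_1=z_0$ (hence $p_1=p_0$ and $\Delta_1=\Delta_0$), while $p_2=(I+\eta B)p_0$ gives, using the skew identity $\langle p_0,Bp_0\rangle=0$, the clean bound $\Delta_2=\Delta_0+\eta^2\|Bp_0\|^2\le(1+\eta^2)\Delta_0\le(1+\eta)^2\Delta_0$.

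Next I would record the spectral facts that make $\gamma$ appear. Since $x_0\in\mathcal{R}(A)$, $y_0\in\mathcal{R}(A^T)$, and every increment stays in these column spaces, all $z_t$ (and all $p_t$) live in the subspace on which $B^TB$ has eigenvalues in $[1/\gamma,\lambda_\infty^2]\subseteq[1/\gamma,1]$. This yields the two-sided bound $\tfrac1\gamma\|p\|^2\le\|Bp\|^2\le\|p\|^2$ for every relevant $p$, which is what converts spectral gaps into the $\eta^2/\gamma^2$ rate and the $\|B\|\le1$ estimate used everywhere.

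The heart of the argument is an exact expansion of $\Delta_t=\|(I+2\eta B)p_{t-1}-\eta Bp_{t-2}\|^2$. Using $(I+2\eta B)^T(I+2\eta B)=I+4\eta^2B^TB$ this gives $\Delta_t=\Delta_{t-1}+4\eta^2\|Bp_{t-1}\|^2+\eta^2\|Bp_{t-2}\|^2-2\eta\langle p_{t-1},Bp_{t-2}\rangle-4\eta^2\langle Bp_{t-1},Bp_{t-2}\rangle$. The single first-order-in-$\eta$ term $-2\eta\langle p_{t-1},Bp_{t-2}\rangle$ is the only obstruction to contraction. The key trick is to substitute $p_{t-1}=(I+2\eta B)p_{t-2}-\eta Bp_{t-3}$ and invoke the skew-symmetry identities $q^TB^{2k+1}q=0$ (so $\langle p_{t-2},Bp_{t-2}\rangle=0$, $\langle B^2p,Bp\rangle=0$, etc.). This annihilates the leading piece and \emph{demotes} the dangerous term to order $\eta^2$; applying the same substitution once more to the residual inner products demotes the leftover cross-terms to order $\eta^3$. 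Note that this is exactly why the recursion is stated only for $t\ge3$: the substitution for $p_{t-1}$ requires $p_{t-3}$ to be defined.

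Collecting terms, the surviving second-order contribution is a net negative multiple of $\eta^2\|Bp_{t-2}\|^2$, which by the spectral lower bound and by relating neighboring iterates is bounded above by $-c\,\tfrac{\eta^2}{\gamma^2}\Delta_{t-1}$; the order-$\eta^3$ remainders are controlled by Cauchy--Schwarz together with $\|B\|\le1$ and an a priori boundedness estimate $\Delta_\tau=O(\Delta_0)$ (established by induction alongside the contraction itself, with $\Delta_2\le(1+\eta)^2\Delta_0$ as seed). This produces $\Delta_t\le(1-\eta^2/\gamma^2)\Delta_{t-1}+16\eta^3\Delta_0$ for $t\ge3$. Finally, solving this scalar affine recursion gives the fixed point $\Delta_t\to O(\eta\gamma^2\Delta_0)$; since the equilibrium set is exactly $\{\Delta=0\}=\mathrm{null}(A^T)\times\mathrm{null}(A)$ and the stated norm $\sqrt{x^TAA^Tx+y^TA^TAy}$ equals $\sqrt{\Delta}$, the last iterate lands within $O(\sqrt{\eta}\,\gamma\sqrt{\Delta_0})$ of equilibrium. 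I expect the main obstacle to be precisely the first-order cross term $\langle p_{t-1},Bp_{t-2}\rangle$, with the decisive idea being that skew-symmetry repeatedly pushes cross-terms to higher order in $\eta$; the secondary nuisance is the crude $O(\Delta_0)$ a priori bound needed to absorb all residuals into the additive $16\eta^3\Delta_0$.
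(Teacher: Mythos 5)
Your reformulation via the skew--symmetric block operator $B=\bigl(\begin{smallmatrix}0 & -A\\ A^T & 0\end{smallmatrix}\bigr)$ is, up to notation, the paper's own proof: the paper's hierarchy $\Delta^i_t=\normlt{M_iA^Tx_t}+\normlt{N_iAy_t}$ is exactly $\normlt{B^{i+1}z_t}$, its Claim~\ref{claim:pushing A's around} is the statement that $B$ is skew and commutes with itself, its Lemmas~\ref{lemma:semi-trivial upper bound} and~\ref{lemma:lower bound} are $\|B\|\le 1$ and the spectral lower bound on $\mathcal{R}(A)\times\mathcal{R}(A^T)$, and your double substitution of the update rule into the first-order cross term is verbatim the derivation of Lemma~\ref{lemma:first bound}. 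So the route is the same, just written more cleanly.

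The one step your sketch does not actually close is the second-order bookkeeping. After the substitutions the exact identity has the form
\[
\Delta_t-\Delta_{t-1}=4\eta^2\normlt{Bp_{t-1}}-5\eta^2\normlt{Bp_{t-2}}-2\eta^3(\cdots),
\]
and your claim that the surviving $\eta^2$ contribution is ``a net negative multiple of $\eta^2\normlt{Bp_{t-2}}$'' requires a near-monotonicity statement for the \emph{level-one} quantity $\normlt{Bp_t}$, not for $\Delta_t=\normlt{p_t}$. The a priori bound $\Delta_\tau=O(\Delta_0)$ that you invoke is useless for this term: applied to each piece separately it only yields $4\eta^2\normlt{Bp_{t-1}}-5\eta^2\normlt{Bp_{t-2}}\le O(\eta^2\Delta_0)$, an error at order $\eta^2$ that swamps the $-\eta^2\Delta_{t-1}/\gamma^2$ contraction and cannot be absorbed into the $16\eta^3\Delta_0$ slack. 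This is precisely why the paper does not run the induction on $\Delta_t$ alone but proves $H(i,t)$ \emph{simultaneously for all levels} $i\ge 0$, using $H(i+1,t-1)$ to get $\normlt{Bp_{t-1}}\le\normlt{Bp_{t-2}}+16\eta^3\Delta^0_0$ when establishing $H(i,t)$. In your formulation you can repair this either by carrying that same all-levels induction (on $\normlt{B^ip_t}$ for every $i$), or by proving directly that $\normlt{Bp_{t-1}}-\normlt{Bp_{t-2}}=O(\eta)\Delta_0$, which again needs the skew-symmetry cancellation $\langle Bp_{t-2},B^2p_{t-2}\rangle=0$ one level up. Either fix is routine, but as written the demotion of these $\eta^2$ terms to ``order-$\eta^3$ remainders'' is asserted rather than proved, and it is the crux of the argument.
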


\section{Experimental Results for Generating DNA Sequences}
We take our theoretical intuition to practice, applying OMD to the problem of generating DNA sequences from an observed distribution of sequences. DNA sequences that carry out the same function can be viewed as samples from some distribution. For many important cellular functions, this distribution can be well modeled by a position-weight matrix (PWM) that specifies the probability of different nucleotides occuring at each position~\citep{stormo2000dna}. Thus, training GANs from DNA sequences sampled from a PWM distribution serves as a practically motivated problem where we know the ground truth and can thus quantify the performance of different training methods in terms of the KL divergence between the trained generator distribution  and the true distribution.

In our experiments, we generated 40,000 DNA sequences of six nucleotides according to a given position weight matrix. A random 10\% of the sequences were held out as the validation set. Each sequence was then embedded into a $4\times6$ matrix by encoding each of the four nucleotides with an one-hot vector. On this dataset, we trained WGANs with different variants of OMD and SGD and evaluated their performance in terms of the KL divergence between the empirical distribution of the WGAN-generated samples and the true distribution described by the position weight matrix. Both the discriminator and generator of the WGAN used in this analysis were chosen to be convolutional neural networks (CNN), given the recent success of CNNs in modeling DNA-protein binding~\citep{Zeng2016, alipanahi2015predicting}. The detailed structure of the chosen CNNs can be found in Appendix~\ref{sec:apdxdna}. 

To account for the impact of learning rate and training epochs, we explored two different ways of model selection when comparing different optimization strategies: (1) using the iteration and learning rate that yields the lowest discriminator loss on the held out test set. This is inspired by the observation in \cite{arjovsky2017wasserstein} that the discriminator loss negatively correlates with the quality of the generated samples. (2) using the model obtained after the last epoch of the training. To account for the stochastic nature of the initialization and optimizers, we trained 50 independent models for each learning rate and optimizer, and compared the optimizer strategies by the resulting distribution of KL divergences across 50 runs.

For GD, we used variants of Equation~\eqref{eqn:GD} to examine the effect of using momentum and an adaptive step size. Specifically, we considered momentum, Nesterov momentum and Adagrad.  The specific form of all these modifications is given for reference in Appendix \ref{sec:GD-variants}.

For OMD we used the general predictor version of Equation~\eqref{eqn:omd-gen} with a fixed step size and with the following variants of the next iteration predictor $M_{t+1}$:
(v1)  Last iteration gradient: $M_{t+1} = \nabla f_t$, (v2)  Running average of past gradients: $M_{t+1} = \frac{1}{t} \sum_{i=1}^t \nabla f_i$, (v3) Hyperbolic discounted average of past gradients: $M_{t+1} = \lambda M_{t} + (1-\lambda)\nabla f_t, \lambda \in (0,1)$. We explored two training schemes: (1) training the discriminator 5 times for each generator training as suggest in \cite{arjovsky2017wasserstein}. (2) training the discriminator once for each generator training. The latter is inline with the intuition behind the use of optimism: optimism hinges on the fact that the gradient at the next iteration is very predictable since it is coming from another regularized algorithm, and if we train the other algorithm multiple times, then the gradient is not that predictable and the benefits of optimism are lost.

\begin{figure}[htbp!]
    \centering
    \begin{subfigure}[t]{0.5\textwidth}
        \centering
        \includegraphics[height=1.4in]{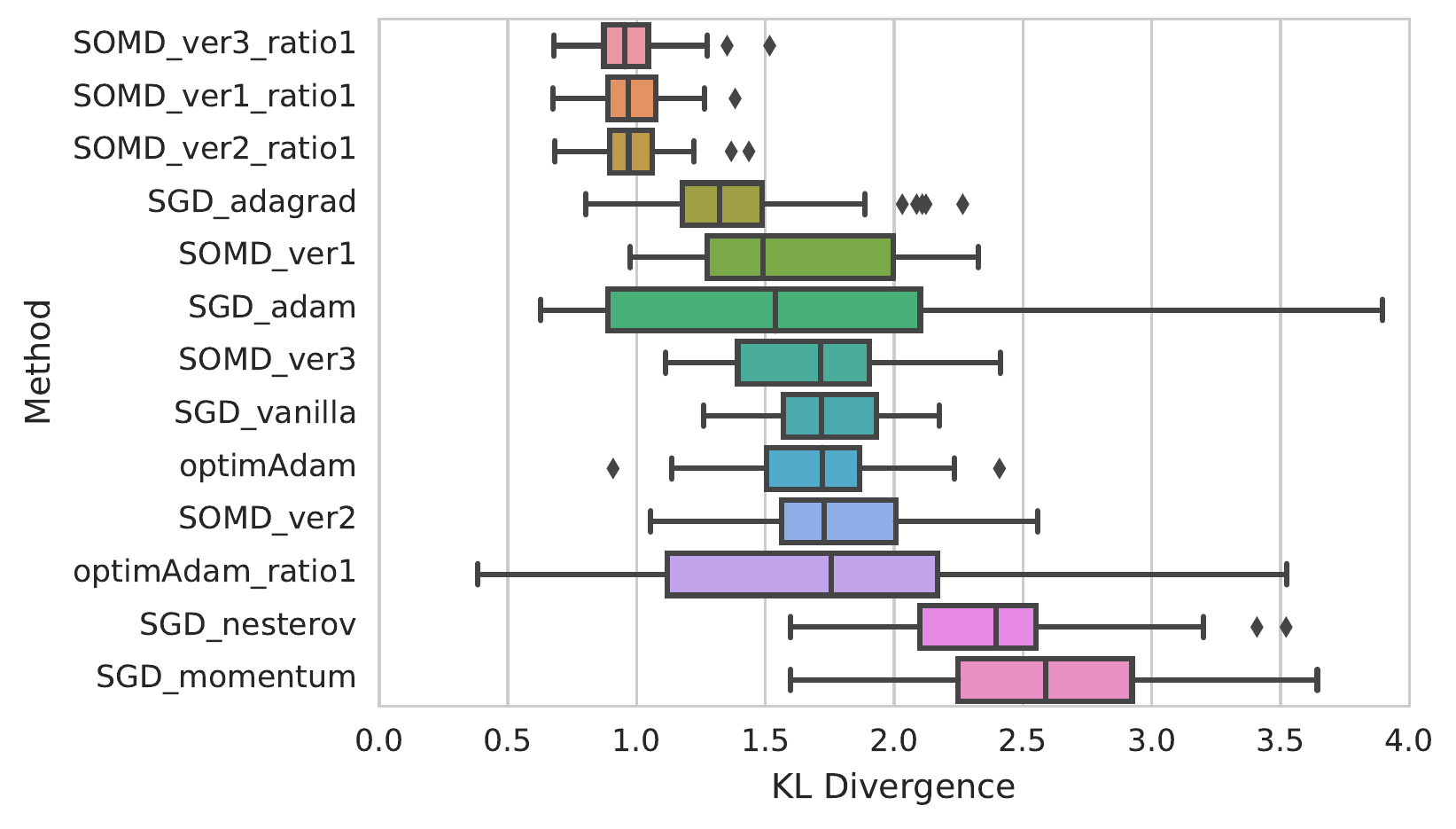}
        \caption{WGAN with the lowest validation discriminator loss}
    \end{subfigure}%
    ~ 
    \begin{subfigure}[t]{0.5\textwidth}
        \centering
        \includegraphics[height=1.4in]{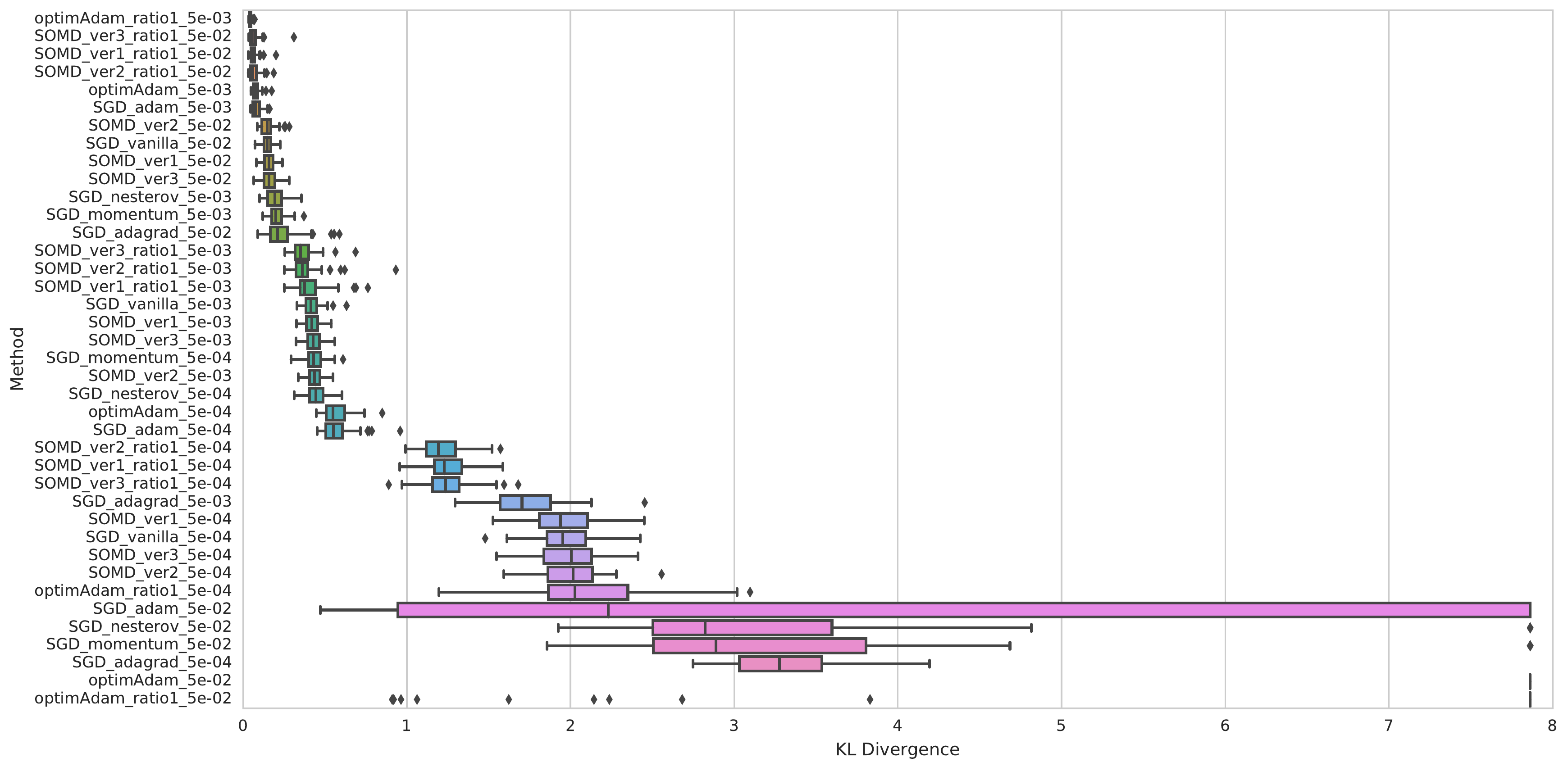}
        \caption{WGAN at the last epoch}
    \end{subfigure}
    \caption{KL divergence of WGAN trained with different optimization strategies. Methods are ordered by the median KL divergence. Methods in (a) are named by the category and version of the method. ``ratio 1" denotes training the discriminator once for every generator training. Otherwise, we performed 5 iterations. For (b) where we don't combine the models trained with different learning rates, the learning rate is appended at the end of the method name. For momentum and Nesterov momentum, we used $\gamma=0.9$. For Adagrad, we used the default $\epsilon=1e^{-8}$.}
    \label{fig:expt}
\end{figure}

For all afore-described algorithms, we experimented with their stochastic variants. Figure~\ref{fig:expt} shows the KL divergence between the WGAN-generated samples and the true distribution. When evaluated by the epoch and learning rate that yields the lowest discriminator loss on the validation set,  WGAN trained with Stochastic OMD (SOMD) achieves lower KL divergence than the competing SGD variants. \vsedit{Evaluated by the last epoch,  
%optimistic Adam achieve the best performance, while only Adam with a carefully chosen learning rate outperforms SOMD. %is less sensitive to the choice of learning rate than the SGD variants. 
the best performance across different learning rates is achieved by optimistic Adam (see Section \ref{sec:cifar10}).} We note that in both metrics, SOMD with 1:1 generator-discriminator training ratio yields better KL divergence than the alternative training scheme (1:5 ratio), which validates the intuition behind the use of optimism.
\vspace{-.1in}
\section{Generating Images from CIFAR10 with Optimistic Adam}\label{sec:cifar10}
\vspace{-.1in}
In this section we applying optimistic WGAN training to generating images, after training on CIFAR10. Given the success of Adam on training image WGANs we will use an optimistic version of the Adam algorithm, rather than vanilla OMD. We denote the latter by \emph{Optimistic Adam}. Optimistic Adam could be of independent interest even beyond training WGANs. We present Optimistic Adam for (G) but the analog is also used for training (D).
\begin{algorithm}[h]
\begin{algorithmic}
\State Parameters: stepsize $\eta$, exponential decay rates for moment estimates $\beta_1, \beta_2\in [0,1)$, stochastic loss as a function of weights $\ell_t(\theta)$, initial parameters $\theta_0$
%\State Initialize parameters to $\theta_0$
\For{each iteration $t\in \{1,\ldots, T\}$}
\State Compute stochastic gradient: $\nabla_{\theta,t} = \nabla_{\theta} \ell_t(\theta)$
\State Update biased estimate of first moment: $m_t = \beta_1 m_{t-1} + (1-\beta_1) \cdot \nabla_{\theta,t}$
\State Update biased estimate of second moment: $v_t = \beta_2 v_{t-1} + (1-\beta_2) \cdot \nabla_{\theta,t}^2$
\State Compute bias corrected first moment: $\hat{m}_t = m_t/(1 - \beta_1^t)$
\State Compute bias corrected second moment: $\hat{v}_t = v_t/(1 - \beta_2^t)$
\State Perform \emph{optimistic gradient step}: $\theta_t = \theta_{t-1} - 2 \eta \cdot \frac{\hat{m}_t}{\sqrt{\hat{v}_t}+\epsilon} + \eta \frac{\hat{m}_{t-1}}{\sqrt{\hat{v}_{t-1}}+\epsilon}$ 
\EndFor
\State Return $\theta_T$
\end{algorithmic}
\caption{\emph{Optimistic ADAM}, proposed algorithm for training WGANs on images. }\label{alg:opt-adam}
\end{algorithm}
\vspace{-.1in}
We trained on CIFAR10 images with Optimistic Adam with the hyper-parameters matched to \cite{Gulrajani2017}, and we observe that it outperforms Adam in terms of inception score (see Figure \ref{fig:optimistic-Adam}), a standard metric of quality of WGANs \citep{Gulrajani2017, salimans2016improved}. In particular we see that optimistic Adam achieves high numbers of inception scores after very few epochs of training. We observe that for Optimistic Adam, training the discriminator once after one iteration of the generator training, which matches the intuition behind the use of optimism, outperforms the 1:5 generator-discriminator training scheme. We see that vanilla Adam performs poorly when the discriminator is trained only once in between iterations of the generator training. Moreover, even if we use vanilla Adam and train $5$ times (D) in between a training of (G), as proposed by \cite{arjovsky2017wasserstein}, then performance is again worse than Optimistic Adam with a 1:1 ratio of training. The same learning rate $0.0001$ and betas ($\beta_1=0.5, \beta_2=0.9$) as in Appendix B of \cite{Gulrajani2017}  were used for all the methods compared. We also matched other hyper-parameters such as gradient penalty coefficient $\lambda$ and batch size. For a larger sample of images see Appendix \ref{sec:appendix-cifar10}. 
\vspace{-.1in}
\begin{figure}[htpb]
    \centering
    \begin{subfigure}[b]{.67\textwidth}
        \centering
    		\includegraphics[height=1.7in]{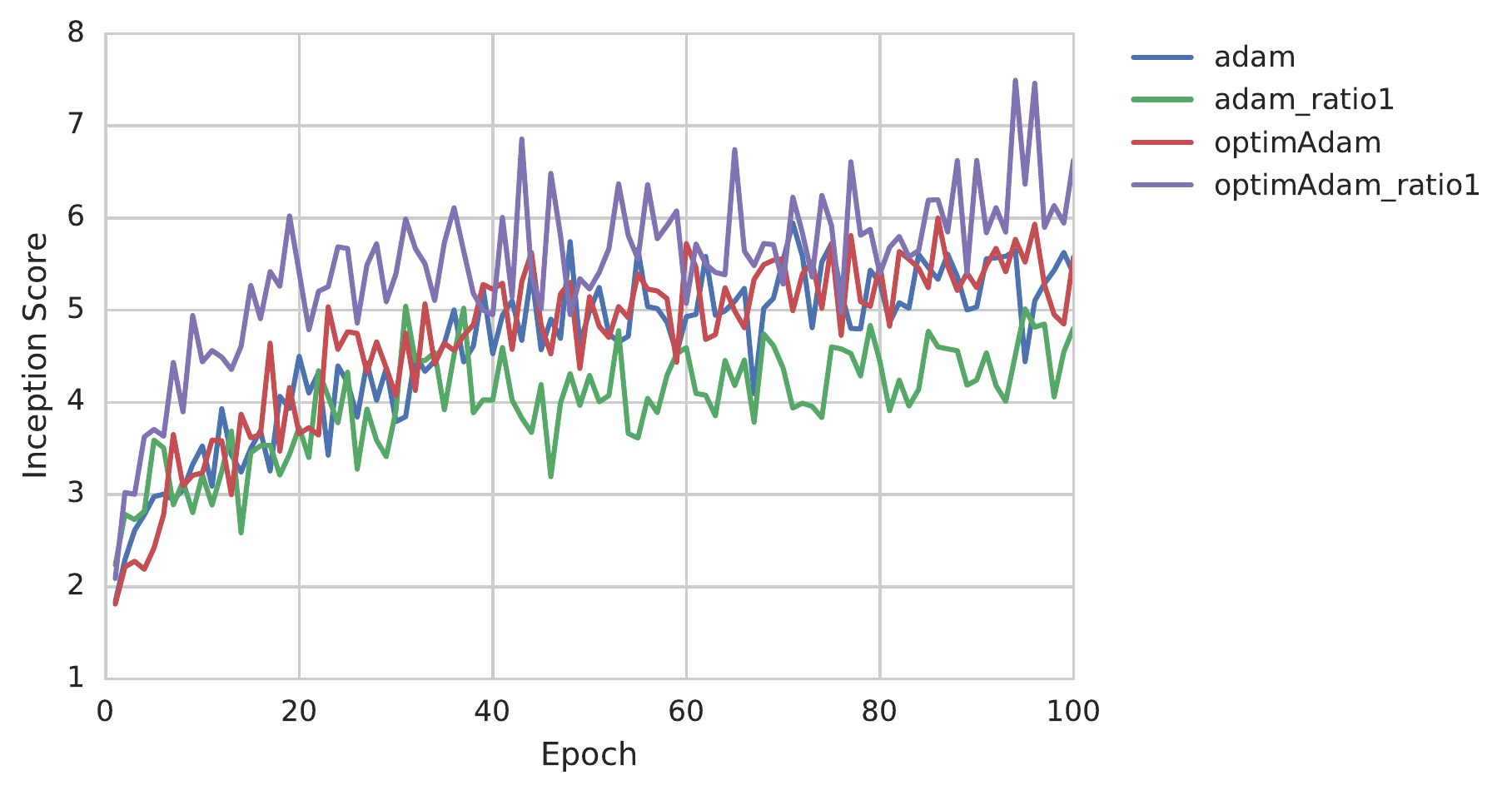}        
		\caption{Inception score on CIFAR10, when training with Adam and Optimistic Adam. ``ratio1" means we performed $1$ iteration of training of (D) in between $1$ iteration of (G). Otherwise we performed $5$ iterations. We further test (averaging over 35 trials) the two top-performing optimizers, Adam (ratio 5) and Optimistic Adam with ratio 1, in Appendix~\ref{sec:appendix-errorbars}.}
    \end{subfigure}
    ~~
    \begin{subfigure}[b]{.3\textwidth}
        \centering
    		\includegraphics[height=1.7in]{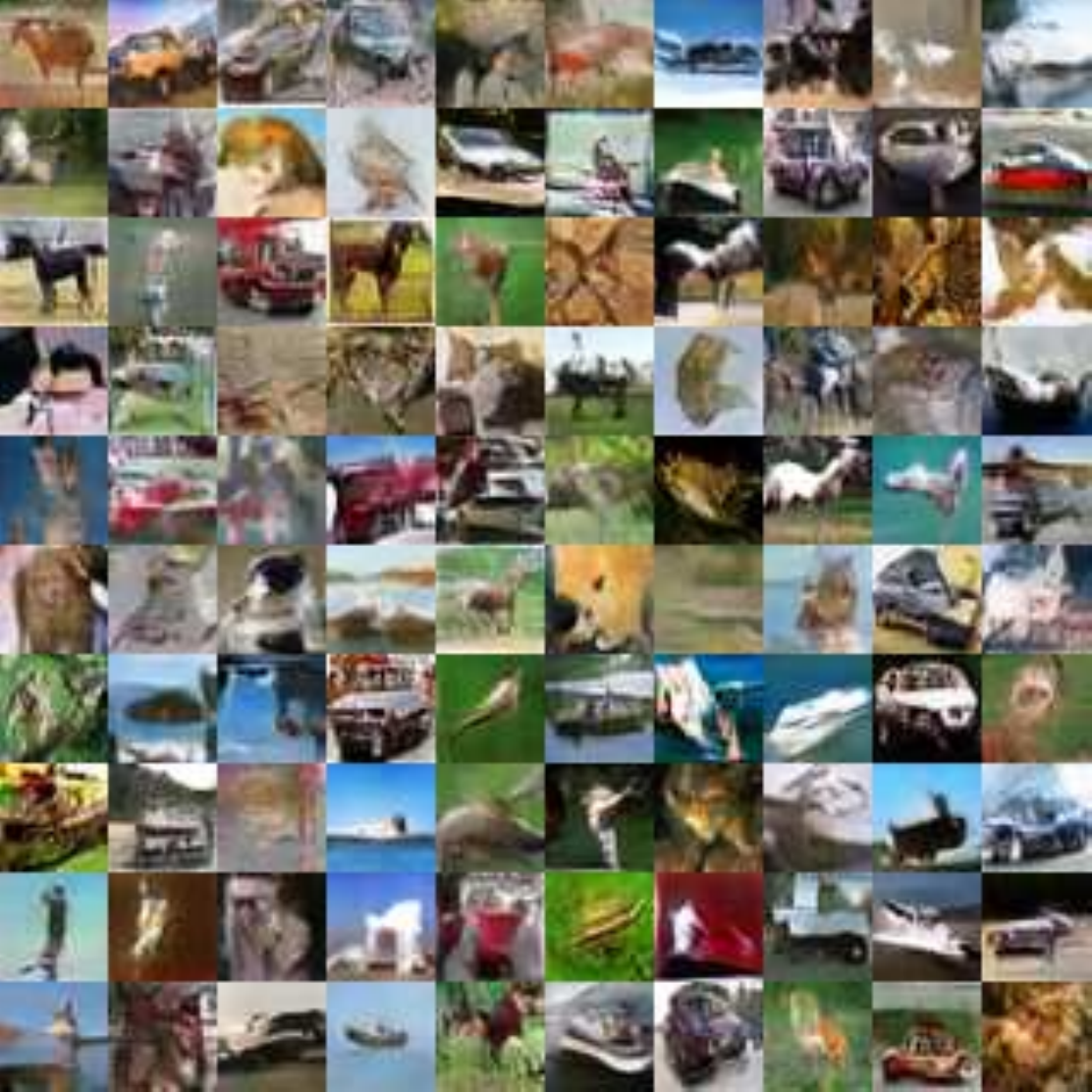}
    	\caption{Sample of images from Generator of Epoch $94$, which had the highest inception score.}
    \end{subfigure}
    \caption{Comparison of Adam and Optimistic Adam on CIFAR10.}\label{fig:optimistic-Adam}
\end{figure}

\bibliographystyle{iclr2018_conference}
\bibliography{agt}

\newpage

\appendix

\section{Variants of GD Training}\label{sec:GD-variants}

For ease of reference we briefly describe the exact form of update rules for several modifications of GD training that we have used in our experimental results. 

Adagrad: 
\begin{equation}\label{eqn:GDadaptive}
\begin{aligned}
\eta_{w,t} = ~&\frac{\eta}{\sqrt{\sum_{i=1}^t \nabla_{w,i}^2} + \epsilon} ~~~~&~~~~ 
\eta_{\theta,t} = ~&\frac{\eta}{\sqrt{\sum_{i=1}^t \nabla_{\theta,i}^2} + \epsilon} \\
w_{t+1} =~& w_t + \eta_{w,t} \cdot \nabla_{w,t}
~~~~&~~~~
\theta_{t+1} =~& \theta_{t} - \eta_{\theta,t} \cdot \nabla_{\theta,t}
\end{aligned}
\end{equation}

Momentum:
\begin{equation}\label{eqn:GDMomentum}
\begin{aligned}
v_{w, t+1} =~& \gamma \cdot v_{w, t} + \eta \cdot \nabla_{w,t} 
~~~~&~~~~
v_{\theta, t+1} =~& \gamma \cdot v_{\theta, t} + \eta \cdot \nabla_{\theta,t} \\
w_{t+1} =~& w_t + v_{w, t+1}
~~~~&~~~~
\theta_{t+1} =~& \theta_t - v_{\theta, t+1}
\end{aligned}
\end{equation}

Nesterov momentum:
\begin{equation}\label{eqn:GDNesterov}
\begin{aligned}
w_{\text{ahead}} =~& w_t + \gamma \cdot v_{w,t} 
~~~~&~~~~
\theta_{\text{ahead}} =~& \theta_t - \gamma \cdot v_{\theta,t} \\
v_{w, t+1} =~& \gamma \cdot v_{w, t} + \eta \cdot \nabla_{w} L(\theta_t, w_{\text{ahead}}) 
~~~~&~~~~
v_{\theta,t+1} =~& \gamma \cdot  v_{\theta,t} + \eta \cdot \nabla_{\theta} L(\theta_{\text{ahead}}, w_t) \\
w_{t+1} =~& w_t + v_{w, t+1}
~~~~&~~~~
\theta_{t+1} =~& \theta_t - v_{\theta,t+1}
\end{aligned}
\end{equation}

\section{Persistence of Limit Cycles in GD Training}
In Figure \ref{fig:persistence} we portray example Gradient Descent dynamics in the illustrative example described in Section \ref{sec:illustrative} under multiple adaptations proposed in the literature. We observe that oscillations persist in all such modified GD dynamics, though alleviated by some. We briefly describe the modifications in detail first.

\paragraph{Gradient penalty.} The Wasserstein GAN is based on the idea that the discriminator is approximating all $1$-Lipschitz functions of the data. Hence, when training the discriminator we need to make sure that the function $D_{w}(x)$ has a bounded gradient with respect to $x$. One approach to achieving this is weight-clipping, i.e. clipping the weights to lie in some interval. However, the latter might introduce extra instability during training. \cite{Gulrajani2017} introduce an alternative approach by adding a penalty to the loss function of the zero-sum game that is essentially the $\ell_2$ norm of the gradient of $D_{w}(x)$ with respect to $x$. In particular they propose the following regularized WGAN loss:
\begin{equation*}
L_{\lambda}(\theta, w) = \E_{x\sim Q}\left[ D_w(x) \right] - \E_{z\sim F}\left[D_w(G_\theta(z))\right] - \lambda \E_{\hat{x} \sim Q_{\epsilon}}\left[\left(\|\nabla_{x} D_w(\hat{x})\|-1\right)^2\right]
\end{equation*}
where $Q_{\epsilon}$ is the distribution of the random vector $\epsilon x + (1-\epsilon) G(z)$ when $x\sim Q$ and $z\sim F$. The expectations in the latter can also be replaced with sample estimates in stochastic variants of the training algorithms.

For our simple example, $\nabla_{x} D_w(x) = w$. Hence, we get the gradient penalty modified WGAN:
\begin{equation}
L_{\lambda}(\theta, w) = \langle w, v-\theta \rangle - \lambda \left(\|w\|-1\right)^2
\end{equation}
Hence, the gradient of the modified loss function with respect to $\theta$ remains unchanged, but the gradient with respect to $w$ becomes:
\begin{equation}
\nabla_{wt} = v-\theta_t - 2\lambda w_{t} \frac{\|w_t\|_2 -1}{\|w_t\|_2}
\end{equation}

\paragraph{Momentum.} GD with momentum was defined in Equation \eqref{eqn:GDMomentum}. For the case of the simple illustrative example, these dynamics boil down to:
\begin{equation}
\begin{aligned}
m_{w, t+1} =& \gamma \cdot m_{w, t} + \eta\cdot (v-\theta_{t})
~~~~&~~~~
m_{\theta, t+1} =& \gamma\cdot m_{\theta, t} - \eta\cdot  w_t\\
w_{t+1} =& w_{t} + m_{w, t+1}
~~~~&~~~~
\theta_{t+1} =& \theta_{t} - m_{\theta, t+1} 
\end{aligned}
\end{equation}

\paragraph{Nesterov momentum.} GD with Nesterov's momentum was defined in Equation \eqref{eqn:GDNesterov}. For the illustrative example, we see that Nesterov's momentum is identical to momentum in the absence of gradient penalty. The reason being that the function is bi-linear. However, with a gradient penalty, Nesterov's momentum boils down to the following update rule. 
\begin{equation}
\begin{aligned}
\hat{w}_t =& w_t + \gamma\cdot m_{w,t}
~~~~&~~~~ &\\
m_{w, t+1} =& \gamma\cdot m_{w, t} + \eta\cdot (v-\theta_{t}) - 2 \eta\cdot \lambda \hat{w}_t \frac{\|\hat{w}_t\|_2 -1}{\|\hat{w}_t\|_2}
~~~~&~~~~
m_{\theta, t+1} =& \gamma \cdot m_{w, t} - \eta \cdot w_t\\
w_{t+1} =& w_{t} + m_{w, t+1}
~~~~&~~~~
\theta_{t+1} =& \theta_{t} - m_{\theta,t+1}
\end{aligned}
\end{equation}

\paragraph{Asymmetric training.} Another approach to reducing cycling is to train the discriminator more frequently than the generator. Observe that if we could exactly solve the supremum problem of the discriminator after every iteration of the generator, then the generator would be simply solving a convex minimization problem and GD should converge point-wise. The latter approach could lead to slow convergence given the finiteness of samples in the case of stochastic training. Hence, we cannot really afford completely solving the discriminators problem. However, training the discriminator for multiple iterations, brings the problem faced by the generator closer to convex minimization rather than solving an equilibrium problem. Hence, asymmetric training could help with cycling. We observe below that asymmetric training is the most effective modification in reducing the range of the cycles and hence making the last-iterate be close to the equilibrium. However, it does not really eliminate the cycles, rather it simply makes their range smaller.
\newpage
\begin{figure}[H]
    \centering
    \begin{subfigure}[b]{1\textwidth}
        \centering
        \includegraphics[height=1.4in]{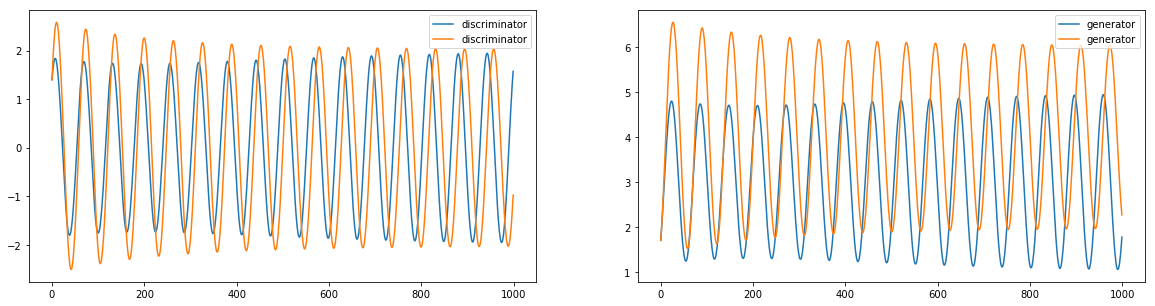}
        \caption{GD dynamics with a gradient penalty added to the loss. $\eta=0.1$ and $\lambda=0.1$.}
    \end{subfigure}
    \begin{subfigure}[b]{1\textwidth}
        \centering
        \includegraphics[height=1.4in]{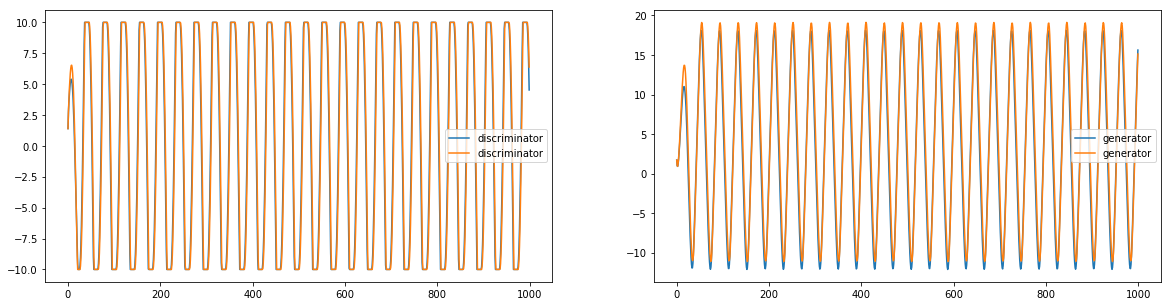}
        \caption{GD dynamics with momentum. $\eta=0.1$ and $\gamma=0.5$.}
    \end{subfigure}
    \begin{subfigure}[b]{1\textwidth}
        \centering
        \includegraphics[height=1.4in]{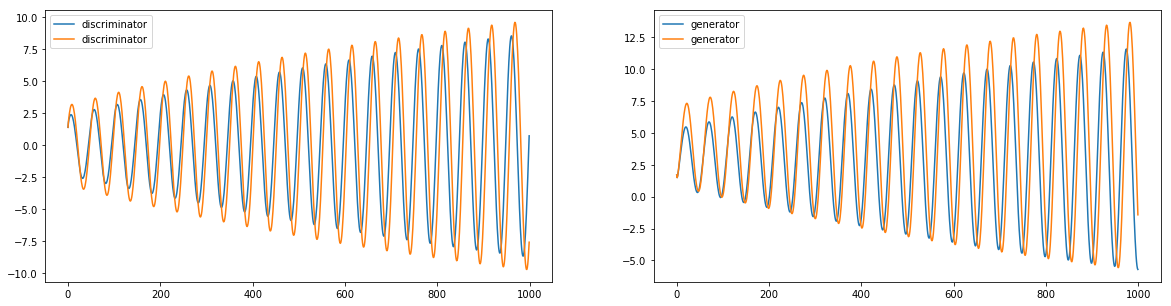}
        \caption{GD dynamics with momentum and gradient penalty. $\eta=.1$, $\gamma=0.2$ and $\lambda=0.1$.}
    \end{subfigure}
    \begin{subfigure}[b]{1\textwidth}
        \centering
        \includegraphics[height=1.4in]{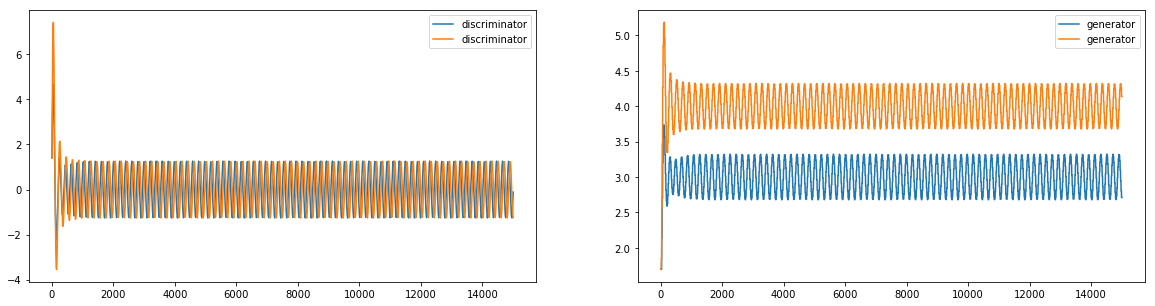}
        \caption{GD dynamics with momentum and gradient penalty, training generator every $15$ training iterations of the discriminator. $\eta=.1$, $\gamma=0.2$ and $\lambda=0.1$.}
    \end{subfigure}
    \begin{subfigure}[b]{1\textwidth}
        \centering
        \includegraphics[height=1.4in]{gd_momentum_and_penalty_train_every_15.png}
        \caption{GD dynamics with Nesterov momentum and gradient penalty, training generator every $15$ training iterations of the discriminator. $\eta=.1$, $\gamma=0.2$ and $\lambda=0.1$.}
    \end{subfigure}
    \caption{Persistence of limit cycles in multiple variants of GD training.}\label{fig:persistence}
\end{figure}

\section{Another Example: Learning a Co-Variance Matrix}\label{sec:covariance}

We demonstrate the benefits of using OMD over GD in another simple illustrative example. In this case, the example is does not boil down to a bi-linear game and therefore, the simulation results portray that the theoretical results we provided for bi-linear games, carry over qualitatively beyond the linear case.

Consider the case where the data distribution is a mean zero multi-variate normal with an unknown co-variance matrix, i.e., $x \sim N(0, \Sigma)$. We will consider the case where the discriminator is the set of all quadratic functions:
\begin{equation}
D_W(x) = \sum_{ij} W_{ij} x_i x_j = x^T W x 
\end{equation}
The generator is a linear function of the random input noise $z\sim N(0, I)$, of the form:
\begin{equation}
G_V(z) = V z
\end{equation}
The parameters $W$ and $V$ are both $d\times d$ matrices. The WGAN game loss associated with these functions is then:
\begin{equation}
L(V, W)= \mathbb{E}_{x\sim N(0, \Sigma)}\left[ x^T W x \right] - \mathbb{E}_{z\sim N(0,I)}\left[z^T V^T W V z \right] 
\end{equation}
Expanding the latter we get:
\begin{align*}
L(V, W)=& \mathbb{E}_{x\sim N(0, \Sigma)}\left[ \sum_{ij} W_{ij} x_i x_j \right] - \mathbb{E}_{z\sim N(0,I)}\left[ \sum_{ij} W_{ij} \sum_{k} V_{ik} z_k  \sum_{m} V_{jm} z_m\right] \\
=&
\mathbb{E}_{x\sim N(0, \Sigma)}\left[ \sum_{ij} W_{ij} x_i x_j \right] - \mathbb{E}_{z\sim N(0,I)}\left[ \sum_{ijkm} W_{ij} V_{ik} V_{jm} z_k z_m \right]\\
=& \sum_{ij} W_{ij} \mathbb{E}_{x\sim N(0, \Sigma)}\left[ x_i x_j \right] - \sum_{ijkm} W_{ij} V_{ik} V_{jm} \mathbb{E}_{z\sim N(0,I)}\left[ z_k z_m \right]\\
=& \sum_{ij} W_{ij} \Sigma_{ij} - \sum_{ijkm} W_{ij} V_{ik} V_{jm} 1\{k=m\}\\
=& \sum_{ij} W_{ij} \Sigma_{ij} - \sum_{ijk} W_{ij} V_{ik} V_{jk}\\
=& \sum_{ij} W_{ij} \left(\Sigma_{ij} - \sum_{k} V_{ik} V_{jk}\right)
\end{align*}
Given that the covariance matrix is symmetric positive definite, we can write it as $\Sigma = U U^T$. Then the loss simplifies to:
\begin{align}
L(V, W) = \sum_{ij} W_{ij} \left(\Sigma_{ij} - \sum_{k} V_{ik} V_{jk}\right) =& \sum_{ijk} W_{ij} \left(U_{ik} U_{jk} -  V_{ik} V_{jk}\right)
\end{align}
The equilibrium of this game is for the generator to choose $V_{ik} = U_{ik}$ for all $i,k$, and for the discriminator to pick $W_{ij}=0$. For instance, in the case of a single dimension we have $L(V,W) = W\cdot (\sigma^2 - V^2)$, where $\sigma^2$ is the variance of the Gaussian. Hence, the equilibrium is for the generator to pick $V=\sigma$ and the discriminator to pick $W=0$.

\paragraph{Dynamics without sampling noise.} For the mean GD dynamics the update rules are as follows:
\begin{equation}
\begin{aligned}
W_{ij}^t =& W_{ij}^{t-1} + \eta \left(\Sigma_{ij} - \sum_{k} V_{ik}^{t-1} V_{jk}^{t-1}\right) \\
V_{ij}^t =& V_{ij}^{t-1} + \eta \sum_{k} \left(W_{ik}^{t-1} + W_{ki}^{t-1}\right) V_{kj}^{t-1} 
\end{aligned}
\end{equation}
We can write the latter updates in a simpler matrix form:
\begin{equation}
\begin{aligned}
W_t =& W_{t-1} + \eta \left(\Sigma - V_{t-1} V_{t-1}^T\right)\\
V_t =& V_{t-1} + \eta (W_{t-1} + W_{t-1}^T) V_{t-1}
\end{aligned}\tag{GD for Covariance}
\end{equation}
Similarly the OMD dynamics are:
\begin{equation}
\begin{aligned}
W_t =& W_{t-1} + 2\eta \left(\Sigma - V_{t-1} V_{t-1}^T\right) - \eta \left(\Sigma - V_{t-2} V_{t-2}^T\right)\\
V_t =& V_{t-1} + 2\eta (W_{t-1} + W_{t-1}^T) V_{t-1} - \eta (W_{t-2} + W_{t-2}^T) V_{t-2}
\end{aligned}\tag{OMD for Covariance}
\end{equation}

Due to the non-convexity of the generators problem and because there might be multiple optimal solutions (e.g. if $\Sigma$ is not strictly positive definite), it is helpful in this setting to also help dynamics by adding $\ell_2$ regularization to the loss of the game. The latter simply adds an extra $2\lambda W_{t}$ at each gradient term $\nabla_W L(V_t, W_t)$ for the discriminator and a $2\lambda V_{t}$ at each gradient term $\nabla_{V} L(V_t, W_t)$ for the generator. In Figures \ref{fig:covariance} and \ref{fig:covariance2d} we give the weights and the implied covariance matrix $\Sigma^G=VV^T$ of the generator's distribution for each of the dynamics for an example setting of the step-size and regularization parameters and for two and three dimensional gaussians respectively. We again see how OMD can stabilize the dynamics to converge pointwise.

\paragraph{Stochastic dynamics.} In Figure \ref{fig:stoch_covariance} and \ref{fig:stoch_covariance2} we also portray the instability of GD and the robustness of the stability of OMD under stochastic dynamics. In the case of stochastic dynamics the gradients are replaced with unbiased estimates or with averages of unbiased estimates over a small minibatch. In the case of a mini-batch of one, the unbiased estimates of the gradients in this setting take the following form:
\begin{equation}
\begin{aligned}
\hat{\nabla}_{W, t} = x_{t} x_{t}^T - V_{t}z_{t} z_{t}^T  V_{t}^T\\
\hat{\nabla}_{V, t} = - (W_{t} + W_{t}^T) V_{t} z_t z_t^T
\end{aligned}\tag{Stochastic Gradients}
\end{equation}
where $x_t, z_t$ are samples drawn from the true distribution and from the random noise distribution respectively. Hence, the stochastic dynamics simply follow by replacing gradients with unbiased estimates:
\begin{equation}
\begin{aligned}
W_t =& W_{t-1} + \eta \hat{\nabla}_{W, t-1}\\
V_t =& V_{t-1} - \eta \hat{\nabla}_{V, t-1}
\end{aligned}\tag{SGD for Covariance}
\end{equation}
\begin{equation}
\begin{aligned}
W_t =& W_{t-1} + 2\eta \hat{\nabla}_{W, t-1} - \eta \hat{\nabla}_{W, t-2}\\
V_t =& V_{t-1} - 2\eta \hat{\nabla}_{V, t-1} + \eta \hat{\nabla}_{V, t-2}
\end{aligned}\tag{SOMD for Covariance}
\end{equation}

\newpage

\begin{figure}[htpb]
    \centering
    \begin{subfigure}[b]{1\textwidth}
        \centering
    		\begin{subfigure}[b]{.3\textwidth}
    		\includegraphics[height=1.7in]{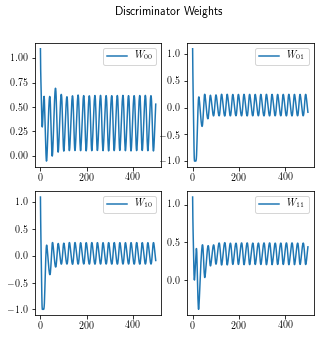}
			\end{subfigure}        
    		\begin{subfigure}[b]{.3\textwidth}
    		\includegraphics[height=1.7in]{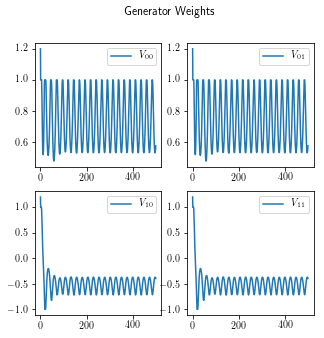}
			\end{subfigure}        
    		\begin{subfigure}[b]{.3\textwidth}
    		\includegraphics[height=1.7in]{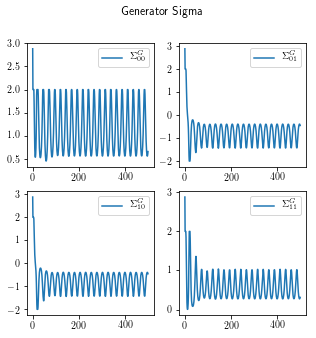}
			\end{subfigure}        
        \caption{GD dynamics. $\eta=0.1$, $T=500$, $\lambda=0.3$.}
    \end{subfigure}
    \begin{subfigure}[b]{1\textwidth}
        \centering
    		\begin{subfigure}[b]{.3\textwidth}
    		\includegraphics[height=1.7in]{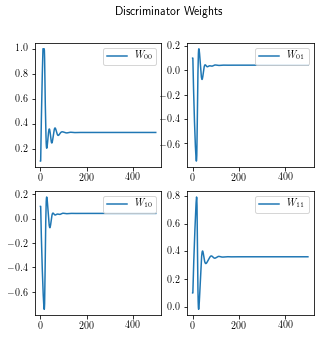}
			\end{subfigure}        
    		\begin{subfigure}[b]{.3\textwidth}
    		\includegraphics[height=1.7in]{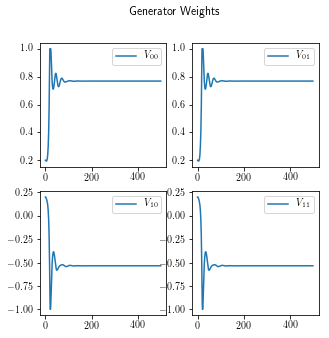}
			\end{subfigure}        
    		\begin{subfigure}[b]{.3\textwidth}
    		\includegraphics[height=1.7in]{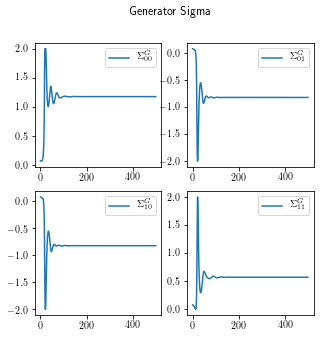}
			\end{subfigure}        
        \caption{OMD dynamics. $\eta=0.1$, $T=500$, $\lambda=0.3$.}
    \end{subfigure}
    \caption{Stability of OMD vs GD in the co-variance learning problem for a two-dimensional gaussian ($d=2$). Weight clipping in $[-1,1]$ was applied in both dynamics.}\label{fig:covariance2d}
\end{figure}

\begin{figure}[H]
    \centering
    \begin{subfigure}[b]{1\textwidth}
        \centering
    		\begin{subfigure}[b]{.3\textwidth}
    		\includegraphics[height=1.7in]{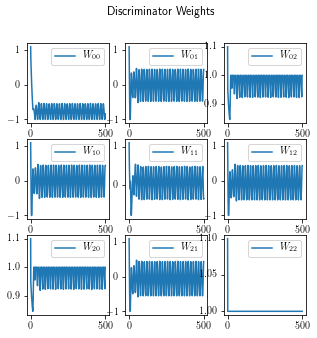}
			\end{subfigure}        
    		\begin{subfigure}[b]{.3\textwidth}
    		\includegraphics[height=1.7in]{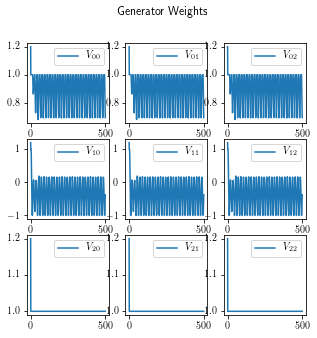}
			\end{subfigure}        
    		\begin{subfigure}[b]{.3\textwidth}
    		\includegraphics[height=1.7in]{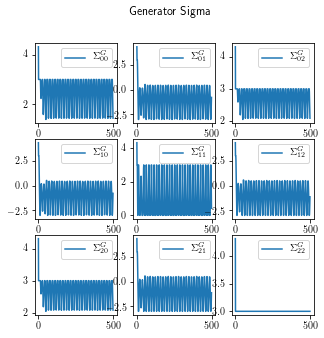}
			\end{subfigure}        
        \caption{GD dynamics. $\eta=0.1$, $T=500$, $\lambda=0.3$.}
    \end{subfigure}
    \begin{subfigure}[b]{1\textwidth}
        \centering
    		\begin{subfigure}[b]{.3\textwidth}
    		\includegraphics[height=1.7in]{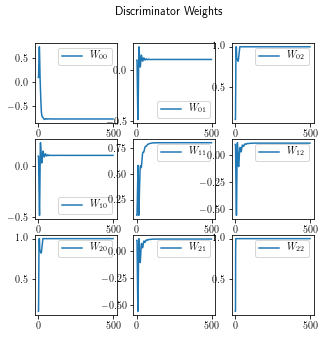}
			\end{subfigure}        
    		\begin{subfigure}[b]{.3\textwidth}
    		\includegraphics[height=1.7in]{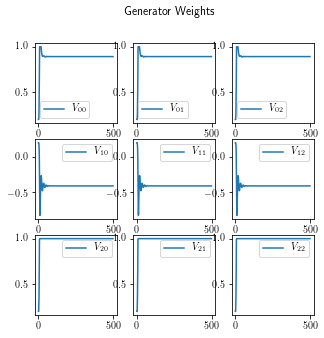}
			\end{subfigure}        
    		\begin{subfigure}[b]{.3\textwidth}
    		\includegraphics[height=1.7in]{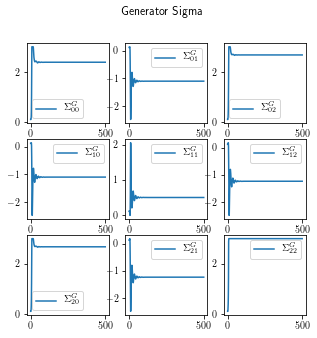}
			\end{subfigure}        
        \caption{OMD dynamics. $\eta=0.1$, $T=500$, $\lambda=0.3$.}
    \end{subfigure}
    \caption{Stability of OMD vs GD in the co-variance learning problem for a three-dimensional gaussian ($d=3$). Weight clipping in $[-1,1]$ was applied in both dynamics.}\label{fig:covariance}
\end{figure}

\newpage

\begin{figure}[H]
    \begin{subfigure}[b]{1\textwidth}
        \centering
    		\begin{subfigure}[b]{.3\textwidth}
    		\includegraphics[height=1.7in]{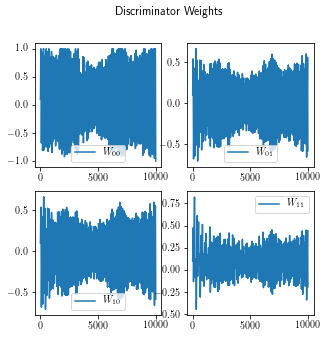}
			\end{subfigure}        
    		\begin{subfigure}[b]{.3\textwidth}
    		\includegraphics[height=1.7in]{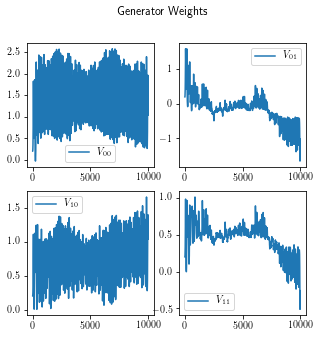}
			\end{subfigure}        
    		\begin{subfigure}[b]{.3\textwidth}
    		\includegraphics[height=1.7in]{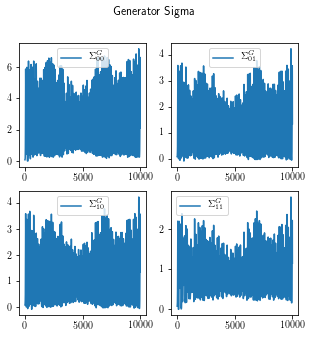}
			\end{subfigure}        
        \caption{Stochastic GD dynamics with mini-batch size $50$. $\eta=0.02$, $T=1000$, $\lambda=0.1$.}
    \end{subfigure}
    \begin{subfigure}[b]{1.01\textwidth}
    		\begin{subfigure}[b]{.19\textwidth}
    		\hspace{-.3in}     	
    		\includegraphics[height=1.3in]{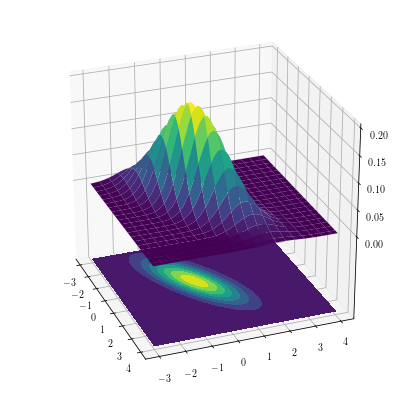}
    		\caption{True Distribution}
			\end{subfigure}        
    		\begin{subfigure}[b]{.19\textwidth}
    		\hspace{-.3in}     	
    		\includegraphics[height=1.3in]{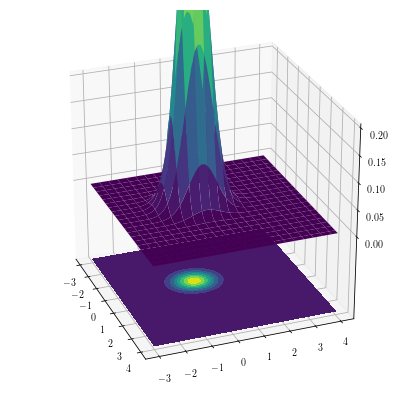}
    		\caption{Iterate $T-50$}
			\end{subfigure}   
    		\begin{subfigure}[b]{.19\textwidth}
    		\hspace{-.3in}     	
    		\includegraphics[height=1.3in]{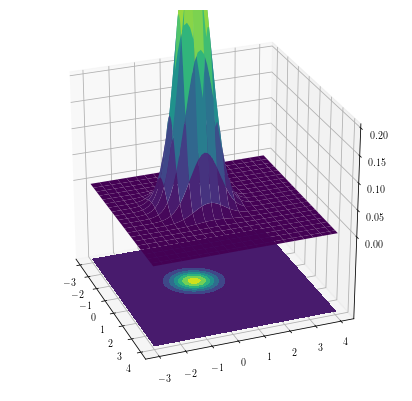}
    		\caption{Iterate $T-35$}
			\end{subfigure}   
    		\begin{subfigure}[b]{.19\textwidth}
    		\hspace{-.3in}     	
    		\includegraphics[height=1.3in]{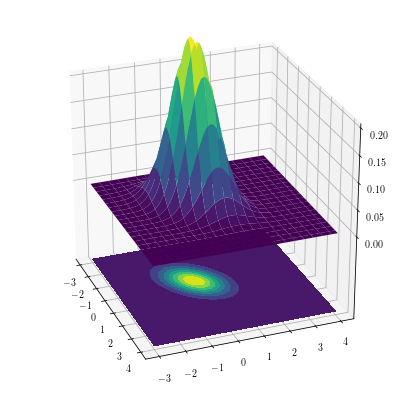}
    		\caption{Iterate $T-20$}
			\end{subfigure}  
    		\begin{subfigure}[b]{.19\textwidth}
    		\hspace{-.3in}     	
    		\includegraphics[height=1.3in]{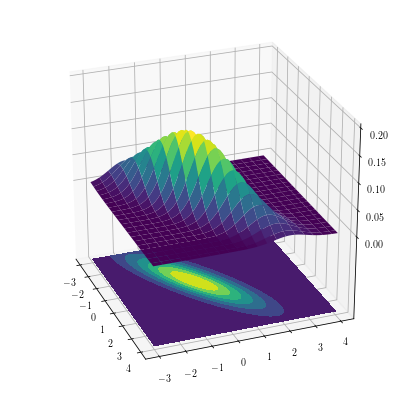}
    		\caption{Iterate $T$}
			\end{subfigure}  
    	\caption{Comparison of true distribution and distribution of generator at various points closer to the end of training.}
    \end{subfigure}
    \caption{Stochastic GD dynamics for covariance learning of a two-dimensional gaussian ($d=2$). Weight clipping in $[-1,1]$ was applied to the discriminator weights.}\label{fig:stoch_covariance}
\end{figure}

\begin{figure}[H]
    \centering
    \begin{subfigure}[b]{1\textwidth}
        \centering
    		\begin{subfigure}[b]{.3\textwidth}
    		\includegraphics[height=1.7in]{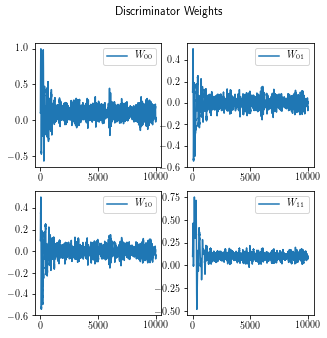}
			\end{subfigure}        
    		\begin{subfigure}[b]{.3\textwidth}
    		\includegraphics[height=1.7in]{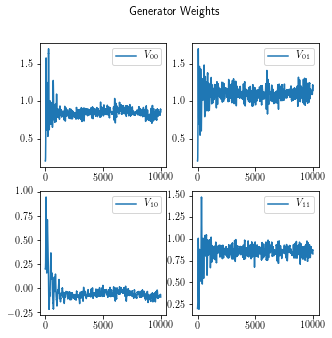}
			\end{subfigure}        
    		\begin{subfigure}[b]{.3\textwidth}
    		\includegraphics[height=1.7in]{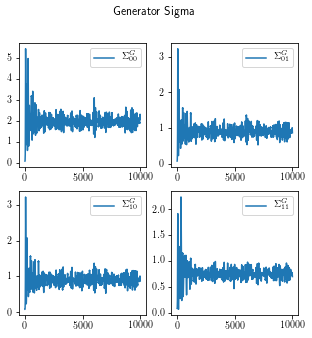}
			\end{subfigure}        
        \caption{Stochastic OMD dynamics with mini-batch size $50$. $\eta=0.02$, $T=1000$, $\lambda=0.1$.}
    \end{subfigure}
    \begin{subfigure}[b]{1.01\textwidth}
    		\begin{subfigure}[b]{.19\textwidth}   
    		\hspace{-.3in}     	
    		\includegraphics[height=1.3in]{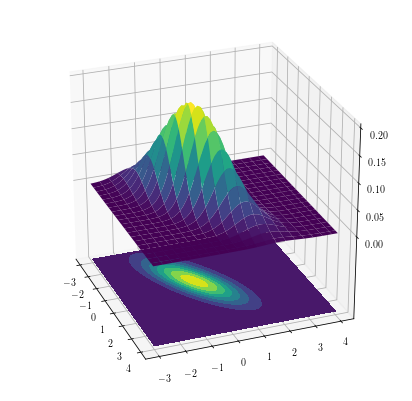}
    		\caption{True Distribution}
			\end{subfigure}        
    		\begin{subfigure}[b]{.19\textwidth}
    		\hspace{-.3in}     	
    		\includegraphics[height=1.3in]{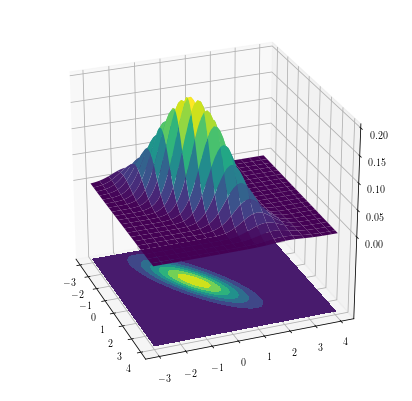}
    		\caption{Iterate $T-50$}
			\end{subfigure}   
    		\begin{subfigure}[b]{.19\textwidth}
    		\hspace{-.3in}     	
    		\includegraphics[height=1.3in]{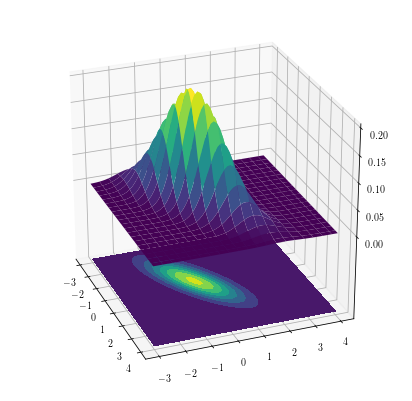}
    		\caption{Iterate $T-35$}
			\end{subfigure}   
    		\begin{subfigure}[b]{.19\textwidth}
    		\hspace{-.3in}     	
    		\includegraphics[height=1.3in]{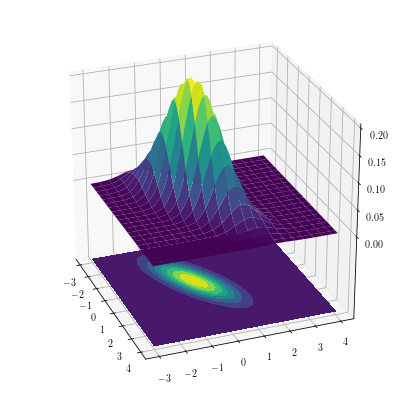}
    		\caption{Iterate $T-20$}
			\end{subfigure}  
    		\begin{subfigure}[b]{.19\textwidth}
    		\hspace{-.3in}     	
    		\includegraphics[height=1.3in]{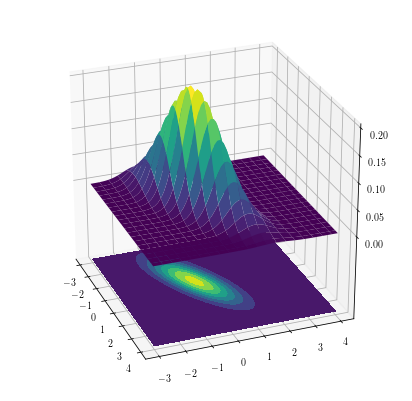}
    		\caption{Iterate $T$}
			\end{subfigure}  
    	\caption{Comparison of true distribution and distribution of generator at various points closer to the end of training.}
    \end{subfigure}
    \caption{Stability of OMD with stochastic gradients for covariance learning of a two-dimensional gaussian ($d=2$). Weight clipping in $[-1,1]$ was applied to the discriminator weights.}\label{fig:stoch_covariance2}
\end{figure}

\section{Last Iterate Convergence of OMD in Bilinear Case}\label{sec:appendix:last-iterate}
 \label{sec:bilinear OMD convergence}

The goal of this section is to show that Optimistic Mirror Descent exhibits last iterate convergence to min-max solutions for bilinear functions. In Section~\ref{app:proof of special case minmax}, we provide the proof of Theorem~\ref{thm:convergence of OGD-main}, that OMD exhibits last iterate convergence to min-max solutions of the following min-max problem
\begin{align}
 \min_x \max_y x^T A y, \label{eq:our minmax}
\end{align}
where $A$ is an abitrary matrix and $x$ and $y$ are unconstrained. In Section~\ref{app:proof of general case minmax}, we state the appropriate extension of our theorem to the general case:
\begin{align}
    \inf_{x} \sup_{y} \left(x^TAy + b^Tx + c^Ty + d\right). \label{eq:general inf sup}
\end{align}

\subsection{Proof of Theorem~\ref{thm:convergence of OGD-main}} \label{app:proof of special case minmax}

\smallskip As stated in Section~\ref{sec:main:proof OMD converges}, for the min-max problem~\eqref{eq:our minmax}
%
%In this section, we show that Optimistic Gradient Descent  
%exhibits final-iterate, rather than only average-iterate convergence to min-max solutions for
%bilinear functions. 
%%In particular, we show that the $\ell_2$ norms of the gradients used by the dynamics shrinks in time.
%More precisely, we consider the problem $\min_x \max_y x^T A y$, for some matrix $A$, where $x$ and $y$ are unconstrained. 
Optimistic Mirror Descent takes the following form, for all $t \ge 1$:
\begin{align}
    x_{t} &= \xto \label{eq:OGD bilinear x}\\
    y_{t} &= \yto  \label{eq:OGD bilinear y}
\end{align}
where for the above iterations to be meaningful we need to specify $x_0,x_{-1},y_0,y_{-1}$. 

\smallskip As stated in Section~\ref{sec:main:proof OMD converges} we allow any initialization $x_0 \in \mathcal{R}(A)$, and $y_0\in\mathcal{R}(A^T)$, and set $x_{-1}=2x_0$ and $y_{-1}=2y_{0}$, where ${\cal R}(\cdot)$ represents column space. In particular, our initialization means that the first step taken by the dynamics gives $x_1=x_0$ and $y_1=y_0$.

Before giving our proof of Theorem~\ref{thm:convergence of OGD-main}, we need some further notation.
%\smallskip We will analyze Optimistic Gradient Descent under the assumption $\lambda_{\infty} \le 1$, where $\lambda_{\infty}=\max\{||A||,||A^T||\}$ and $||\cdot||$ denotes spectral norm of matrices. We can always enforce that $\lambda_{\infty} \le 1$ by appropriately scaling $A$. Scaling $A$ by some positive factor clearly does not change the min-max solutions $(x^*,y^*)$, only scales the optimal value $x^{*T}Ay^*$ by the same factor.
%
%\begin{remark}
%We remark that $(x,y)=(0,0)$ is always a solution to $\min_x \max_y x^T A y$. More generally, the solutions to the problem are pairs $(x,y)$ such that $x$ is in the null space of $A^T$ and $y$ is in the null space of $A$. In particular, finding a solution to $\min_x \max_y x^T A y$ is a trivial problem. So this section only serves the purpose of rigorously showing that Optimistic Gradient Descent converges to a min-max solution. This is interesting in light of the fact that Gradient Descent actually diverges, even in the special case where $A$ is the identity matrix, as per the following proposition whose proof is provided in Appendix~\ref{appendix:omitted proofs}.
%
%
%{\begin{proposition} \label{prop:gradient descent unstable}
%Gradient descent applied to problem $\min_x \max_y x^T A y$ diverges from any initialization $x_0, y_0$ such that $x_0,y_0 \neq 0$.
%\end{proposition}}
%\end{remark}
%
%
%\paragraph{Notation.} We start with some notation that will be handy later on. 
For all $i \in \mathbb{N}$, we set:
\begin{align*}
      &~~~~~~~~~~M_i = A^j(A^TA)^k, N_i = \(A^T\)^j \(AA^T\)^k \\ 
    &~~~~~~~~~~\Delta^i_t = \normlt{N_iAy_t} + \normlt{M_iA^Tx_t}\\
		&\text{where $k \in \mathbb{Z}$ and $j \in \{0, 1\}$ are such that:~}   i = 2k + j.
\end{align*}
\noindent With this notation, $\Delta_t^0 =
\normlt{A^Tx_t} + \normlt{Ay_t}$, $\Delta^1_t = \normlt{AA^Tx_t} +
\normlt{A^TAy_t}$, $\Delta^2_t = \normlt{A^TAA^Tx_t} +
\normlt{AA^TAy_t}$, etc.

We also use the notation $\langle u, v \rangle_X = u^TXX^Tv$, for vectors $u, v
\in \mathbb{R}^d$ and square $d \times d$ matrices $X$. We similarly define the
norm notation $||u||_X=\sqrt{\langle u, u \rangle_X}$. Given our notation, we
have the following claim, shown in Appendix~\ref{appendix:omitted proofs}.
\begin{claim} \label{claim:pushing A's around}
For all matrices $A$ and vectors $u,v$ of the appropriate dimensions:\\
$$\innerab{Au}{Av}{i} = \innerac{u}{v}{i+1};~~\innerac{A^Tu}{A^Tv}{i} = \innerab{u}{v}{i+1};~~\innerab{u}{Av}{i} = \innerac{v}{A^Tu}{i}.$$
\end{claim}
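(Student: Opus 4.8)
The plan is to unfold every weighted inner product through its definition $\inner{u}{v}{X} = u^T X X^T v$ together with the explicit forms $M_i = A^j (A^T A)^k$ and $N_i = (A^T)^j (AA^T)^k$ (where $i = 2k+j$, $j \in \{0,1\}$), turning each of the three claimed identities into a purely algebraic statement about products of $A$ and $A^T$. Expanding the weights gives $\innerab{u}{v}{i} = u^T A M_i^T M_i A^T v$ and $\innerac{u}{v}{i} = u^T A^T N_i^T N_i A v$, so each identity becomes a matrix equation: the first reads $A^T A\, M_i^T M_i\, A^T A = A^T N_{i+1}^T N_{i+1} A$, the second reads $A A^T\, N_i^T N_i\, A A^T = A M_{i+1}^T M_{i+1} A^T$, and the third, after transposing the right-hand bilinear form (using $u^T B v = v^T B^T u$), reads $A M_i^T M_i A^T A = A A^T N_i^T N_i A$.

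The key simplification I would establish first is that the flanked products collapse to clean powers,
\[
M_i^T M_i = (A^T A)^i, \qquad N_i^T N_i = (A A^T)^i,
\]
which follows from $(A^T)^j A^j = (A^T A)^j$ and $A^j (A^T)^j = (A A^T)^j$ for $j \in \{0,1\}$, with $i = 2k+j$. Substituting these, the first identity becomes $(A^T A)^{i+2} = A^T (A A^T)^{i+1} A$, the second becomes $(A A^T)^{i+2} = A (A^T A)^{i+1} A^T$, and the third becomes $A (A^T A)^{i+1} = (A A^T)^{i+1} A$. All three now follow from the single commutation relation
\[
A (A^T A)^m = (A A^T)^m A, \qquad A^T (A A^T)^m = (A^T A)^m A^T,
\]
itself a one-line induction from the associativity identity $A (A^T A) = (A A^T) A$; for instance $A^T (A A^T)^{i+1} A = (A^T A)^{i+1} A^T A = (A^T A)^{i+2}$ settles the first, and the third is literally the commutation relation with $m = i+1$.

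The computation carries no spectral or analytic content about $A$, so there is no genuinely hard step; the only place requiring care is the index bookkeeping. Specifically, I expect the main (minor) obstacle to be confirming $M_i^T M_i = (A^T A)^i$ under the parity split $i = 2k+j$, and matching it against the subscript shift $i \mapsto i+1$: one must check that the exponent produced by pre- and post-multiplying a power of $A^T A$ (or $A A^T$) by factors of $A$ and $A^T$ equals exactly the incremented index, and that each commutation move swaps $(A^T A)$-powers for $(A A^T)$-powers consistently. Once $M_i^T M_i = (A^T A)^i$, $N_i^T N_i = (A A^T)^i$, and the commutation relations are in hand, each of the three identities reduces to a one-line exponent calculation.
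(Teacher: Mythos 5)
Your proposal is correct and follows essentially the same route as the paper's proof: unfold the weighted inner products into explicit matrix products and move factors of $A$ across powers of $A^TA$ and $AA^T$ via the commutation $A(A^TA)^m = (AA^T)^m A$. Your intermediate identity $M_i^TM_i = (A^TA)^i$, $N_i^TN_i = (AA^T)^i$ is a clean streamlining that collapses everything to pure powers and lets you avoid the parity case split on $j$ that the paper performs for the third identity, but the underlying argument is the same index-bookkeeping computation.
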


With our notation in place, we show (through iterated expansion of the update rule), the
following lemma, proved in Appendix~\ref{appendix:omitted proofs}:
\begin{lemma}  \label{lemma:first bound}
For the dynamics of Eq.~\eqref{eq:OGD bilinear x} and~\eqref{eq:OGD bilinear y} and any initialization ${1 \over 2}x_{-1}=x_0 \in
\mathcal{R}(A)$, and ${1 \over 2}y_{-1}=y_0\in\mathcal{R}(A^T)$ we have the following for all $i, t \in \mathbb{N}$ such that $i\ge 0$ and $t \ge 2$:
$$\Delta^i_t - \Delta^i_{t-1} = 4\eta^2\Delta^{i+1}_{t-1} -
5\eta^2\Delta^{i+1}_{t-2} - 2\eta^3\(\innerab{x_{t-2}}{Ay_{t-4}}{i+1} -
\innerac{y_{t-2}}{A^Tx_{t-4}}{i+1}\).$$
\end{lemma}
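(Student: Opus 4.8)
The plan is to verify the identity directly, by substituting the update rules~\eqref{eq:OGD bilinear x} and~\eqref{eq:OGD bilinear y} into $\Delta^i_t = \normab{x_t}{i} + \normac{y_t}{i}$ and expanding, using \emph{only} the three identities of Claim~\ref{claim:pushing A's around} to shuttle factors of $A$ and $A^T$ between the two slots of the inner products and thereby raise the level index from $i$ to $i+1$. A single substitution produces terms of order $\eta^0,\eta^1,\eta^2$; the order-$\eta^0$ part is exactly $\Delta^i_{t-1}$, which cancels against the left-hand side.

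First I would expand the two squared norms. The order-$\eta^2$ diagonal pieces $\langle Ay_{t-1},Ay_{t-1}\rangle,\ \langle Ay_{t-2},Ay_{t-2}\rangle$ and their $x$-counterparts collapse, via identities~1 and~2, into $4\eta^2\Delta^{i+1}_{t-1}+\eta^2\Delta^{i+1}_{t-2}$, while the order-$\eta^2$ cross pieces assemble into $-4\eta^2 R_t$, where $R_t := \innerac{y_{t-1}}{y_{t-2}}{i+1} + \innerab{x_{t-1}}{x_{t-2}}{i+1}$. For the order-$\eta^1$ terms I would use identity~3 to rewrite $\innerab{x_{t-1}}{Ay_s}{i}$ as $\innerac{y_s}{A^Tx_{t-1}}{i}$; the diagonal cross term $\innerac{y_{t-1}}{A^Tx_{t-1}}{i}$ then appears with opposite signs from the $x$- and $y$-expansions and cancels, leaving a single antisymmetric term $2\eta G_t$ with $G_t := \innerac{y_{t-2}}{A^Tx_{t-1}}{i} - \innerac{y_{t-1}}{A^Tx_{t-2}}{i}$.

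Next I would substitute the update rules a second time into $G_t$. The two copies of $\innerac{y_{t-2}}{A^Tx_{t-2}}{i}$ cancel, and the remainder, after applying identities~1--3, gives $G_t = -2\eta\Delta^{i+1}_{t-2} + \eta R_{t-1}$ with $R_{t-1} := \innerac{y_{t-2}}{y_{t-3}}{i+1} + \innerab{x_{t-2}}{x_{t-3}}{i+1}$; this converts the leftover $O(\eta)$ contribution into $O(\eta^2)$. Collecting terms yields the exact intermediate identity $\Delta^i_t - \Delta^i_{t-1} = 4\eta^2\Delta^{i+1}_{t-1} - 3\eta^2\Delta^{i+1}_{t-2} - 4\eta^2 R_t + 2\eta^2 R_{t-1}$, so the task reduces to proving $-4\eta^2 R_t + 2\eta^2 R_{t-1} = -2\eta^2\Delta^{i+1}_{t-2} - 2\eta^3 Q$, where $Q := \innerab{x_{t-2}}{Ay_{t-4}}{i+1} - \innerac{y_{t-2}}{A^Tx_{t-4}}{i+1}$.

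The crucial and most delicate step is this last reduction, and the key idea is to expand the two adjacent-time cross terms in \emph{opposite} directions so that both become centered at time $t-2$. Substituting the time-$(t-1)$ iterates into $R_t$ gives $R_t = \Delta^{i+1}_{t-2} - \eta W$, where $W := \innerab{x_{t-3}}{Ay_{t-2}}{i+1} - \innerab{x_{t-2}}{Ay_{t-3}}{i+1}$ and the symmetric $O(\eta)$ piece vanishes by identity~3; substituting the time-$(t-3)$ iterates into $R_{t-1}$ (i.e.\ solving the dynamics for $x_{t-3},y_{t-3}$) gives, again via identity~3, $R_{t-1} = \Delta^{i+1}_{t-2} - 2\eta W - \eta Q$. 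Plugging both into the intermediate identity, the $\Delta^{i+1}_{t-2}$ contributions combine to $-2\eta^2\Delta^{i+1}_{t-2}$ and, decisively, the $W$ terms cancel ($4\eta^3 W - 4\eta^3 W = 0$), leaving precisely $-2\eta^3 Q$. The main obstacle is exactly this bookkeeping: one must expand $R_t$ forward and $R_{t-1}$ backward, track the six distinct level-$(i+1)$ cross terms, and recognize that the surviving antisymmetric combination is $Q$ with indices $t-2$ and $t-4$; the target coefficients $-5\eta^2$ and $-2\eta^3$ hinge entirely on the $W$-cancellation. For the smallest $t$ the expansion reaches iterates such as $x_{t-4}$, so the boundary cases are read off from the stated initialization ${1\over 2}x_{-1}=x_0,\ {1\over 2}y_{-1}=y_0$.
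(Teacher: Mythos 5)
Your proposal is correct and follows essentially the same route as the paper's proof: one substitution of the update rule to isolate the $O(\eta)$ antisymmetric remainder, a second substitution to push it to order $\eta^2$ centered at time $t-2$, and a final use of the rearranged time-$(t-2)$ update $x_{t-3}-2\eta Ay_{t-3}=x_{t-2}-\eta Ay_{t-4}$ (your backward expansion of $R_{t-1}$ and the $W$-cancellation are exactly the paper's ``gathering'' step in different bookkeeping). The one detail you gloss over is the boundary: at $t=2$ the identity references $x_{-2},y_{-2}$, which are \emph{not} determined by the stated initialization, and the paper must explicitly define $x_{-2}=4x_0+\frac{1}{\eta}(A^T)^{+}y_0$ and $y_{-2}=4y_0-\frac{1}{\eta}A^{+}x_0$ (possible because $x_0\in\mathcal{R}(A)$ and $y_0\in\mathcal{R}(A^T)$) so that the update equation also holds at $t=0$.
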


\medskip We are ready to prove Theorem~\ref{thm:convergence of OGD-main}. Its proof is implied by the following stronger theorem, and Corollary~\ref{cor:gradient becomes small}.

%\begin{theorem}\label{thm:convergence of OGD}
%Consider the dynamics of Eq.~\eqref{eq:OGD bilinear x} and~\eqref{eq:OGD bilinear y} and any initialization ${1 \over 2}x_{-1}=x_0 \in
%\mathcal{R}(A)$, and ${1 \over 2}y_{-1}=y_0\in\mathcal{R}(A^T)$. Let $\gamma = \max\(\left|\left|\(AA^T\)^{+}\right|\right|,
	    %\left|\left|\(A^TA\)^{+}\right|\right|\)$, where for a matrix $X$ we denote by	$X^{+}$ its generalized inverse and by $||X||$ its
    %spectral norm. Suppose that $\max\{||A||,||A^T||\}\equiv \lambda_{\infty}\le 1$ and $\eta<\gamma$. Then, for all $i \in \mathbb{N}$:
		%\begin{align}
		%\Delta^i_1 = \Delta^i_{0}, \label{eq:target condition 1}
		%\end{align}
		%and, for all $i,t\in \mathbb{N}$ such that $t \ge 2$, the following condition holds:
%\begin{align}
    %H(i,t):~~\Delta^i_t \leq \left(1-{\eta}\right)\Delta^i_{t-1} + O(\eta^3 \Delta^0_0). \label{eq:target condition 2}
%\end{align}
%\end{theorem}

\begin{theorem}\label{thm:convergence of OGD}
Consider the dynamics of Eq.~\eqref{eq:OGD bilinear x} and~\eqref{eq:OGD bilinear y} and any initialization ${1 \over 2}x_{-1}=x_0 \in
\mathcal{R}(A)$, and ${1 \over 2}y_{-1}=y_0\in\mathcal{R}(A^T)$. Let $$\gamma = \max\(\left|\left|\(AA^T\)^{+}\right|\right|,
	    \left|\left|\(A^TA\)^{+}\right|\right|\),$$ where for a matrix $X$ we denote by	$X^{+}$ its generalized inverse and by $||X||$ its
    spectral norm. Suppose that $\max\{||A||,||A^T||\}\equiv \lambda_{\infty}\le 1$ and $\eta$ is a small enough constant satisfying $\eta <1/(3\gamma^2)$. Then, for all $i \in \mathbb{N}$:
		\begin{align}
		\Delta^i_1 = \Delta^i_{0}, \label{eq:target condition 1}\\
		\Delta^i_2 \le (1+\eta)^2\Delta^i_{0}, \label{eq:target condition 1.5}
		\end{align}
		and, for all $i,t\in \mathbb{N}$ such that $t \ge 3$, the following condition holds:
\begin{align}
    H(i,t):~~\Delta^i_t \leq \left(1-{\eta^2 \over \gamma^2}\right)\Delta^i_{t-1} + 16\eta^3 \Delta^0_0. \label{eq:target condition 2}
\end{align}
\end{theorem}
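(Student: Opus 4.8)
The plan is to prove the three claims by a single induction on $t$, carried out simultaneously over all levels $i\in\mathbb{N}$, with Lemma~\ref{lemma:first bound} as the engine and Claim~\ref{claim:pushing A's around} as the bookkeeping device for moving factors of $A$ between levels. Before the induction I would record three structural facts. First, the chosen initialization gives $x_1=x_0$ and $y_1=y_0$ (the $2\eta$ and $-\eta$ contributions cancel because $x_{-1}=2x_0$, $y_{-1}=2y_0$), which immediately yields $\Delta^i_1=\Delta^i_0$ for every $i$, i.e.\ \eqref{eq:target condition 1}. Second, every iterate stays in the relevant column space, $x_t\in\mathcal{R}(A)$ and $y_t\in\mathcal{R}(A^T)$, since each update adds to $x_{t-1}$ only vectors of the form $Ay$. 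Third, and most importantly, I would establish the two-sided comparison between consecutive levels
$$\tfrac{1}{\gamma}\,\Delta^i_t \;\le\; \Delta^{i+1}_t \;\le\; \lambda_\infty^2\,\Delta^i_t \;\le\; \Delta^i_t.$$
The upper bound is just $\|Av\|\le\lambda_\infty\|v\|$ applied to the extra factor of $A$ (resp.\ $A^T$) that distinguishes $M_{i+1},N_{i+1}$ from $M_i,N_i$; the lower bound follows because the vector receiving that extra factor lies in $\mathcal{R}(A^T)$ (resp.\ $\mathcal{R}(A)$), where $\|A\cdot\|^2\ge\sigma_{\min}^2\|\cdot\|^2=\tfrac{1}{\gamma}\|\cdot\|^2$ by the SVD and the definition of $\gamma$. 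These comparisons are what let the coupling to level $i+1$ in Lemma~\ref{lemma:first bound} be converted back into level $i$ without $\gamma$-factors compounding across the induction.

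For the $t=2$ base case \eqref{eq:target condition 1.5} I would avoid the lemma and expand directly: $x_2=x_0-\eta Ay_0$ and $y_2=y_0+\eta A^Tx_0$. Squaring and using Claim~\ref{claim:pushing A's around}, the two linear cross terms cancel exactly (the identity $M_i^TM_iA^TA=A^TN_i^TN_iA$ holds by the definition of $M_i,N_i$), leaving the clean identity $\Delta^i_2=\Delta^i_0+\eta^2\Delta^{i+1}_0\le(1+\eta^2)\Delta^i_0\le(1+\eta)^2\Delta^i_0$. This cancellation is the discrete analogue of energy being conserved to first order, and is the reason optimism does not blow up.

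The heart is the inductive step $H(i,t)$ for $t\ge 3$, proved for all $i$ at once assuming $H(\cdot,t-1)$, $H(\cdot,t-2)$ and the base cases. Starting from Lemma~\ref{lemma:first bound}, I would (i) bound the $\eta^3$ cross term by Cauchy--Schwarz in the $\langle\cdot,\cdot\rangle_{AM_i^T}$ and $\langle\cdot,\cdot\rangle_{A^TN_i^T}$ inner products and push the stray factor of $A$ up a level via Claim~\ref{claim:pushing A's around}, giving $|2\eta^3(\cdots)|\le \eta^3(\Delta^i_{t-2}+\Delta^{i+1}_{t-4})$; and (ii) rewrite the $\eta^2$ part $4\Delta^{i+1}_{t-1}-5\Delta^{i+1}_{t-2}$ using $H(i+1,t-1)$, so that the coefficient $4-5=-1$ produces a genuinely negative term $-\eta^2\Delta^{i+1}_{t-2}$, which the lower comparison turns into $-\tfrac{\eta^2}{\gamma}\Delta^i_{t-2}$ and then (via $H(i,t-1)$) into $-\tfrac{\eta^2}{\gamma^2}\Delta^i_{t-1}$ up to an $O(\eta^5\Delta^0_0)$ remainder. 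The $t=3$ case is handled separately, using the exact formula for $\Delta^i_2$ and the initialization value $\Delta^i_{-1}=4\Delta^i_0$.

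The main obstacle is purely quantitative: closing the constants uniformly in $i$ and $t$, so that the additive error stays exactly $16\eta^3\Delta^0_0$ — the same level-$0$ quantity at every level. I would therefore carry an auxiliary a priori bound $\Delta^i_t\le C\Delta^0_0$ inside the induction and choose $C$ jointly with the constant $16$ so that (a) the cross-term remainder $\eta^3(\Delta^i_{t-2}+\Delta^{i+1}_{t-4})\le 2C\eta^3\Delta^0_0$ fits under $16\eta^3\Delta^0_0$, and (b) the contraction factor $1-\eta^2/\gamma^2$ together with the additive term reproduces $\Delta^i_t\le C\Delta^0_0$, both under the hypothesis $\eta<1/(3\gamma^2)$. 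Making these two requirements simultaneously consistent — the steady state of the recursion is $\approx 16\eta\gamma^2\Delta^0_0$, which must be shown to remain inside the a priori window — is the delicate bookkeeping, and the level comparisons above are precisely what prevent the $\gamma$-factors from accumulating. Finally, Theorem~\ref{thm:convergence of OGD-main} is the $i=0$ specialization of \eqref{eq:target condition 2}, and iterating $H(0,t)$ yields $\Delta_t\to O(\eta\gamma^2\Delta_0)$.
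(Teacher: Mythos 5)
Your proposal is correct and follows essentially the same route as the paper's proof: the same initialization facts ($x_1=x_0$, $y_1=y_0$, iterates confined to $\mathcal{R}(A)$, $\mathcal{R}(A^T)$), the same level-comparison lemmas (the upper bound via $\lambda_\infty\le 1$ and the lower bound via $\gamma$, which you state per level with factor $1/\gamma$ while the paper states it per two levels with $1/\gamma^2$), and the same induction on $t$ uniform in $i$ driven by Lemma~\ref{lemma:first bound}, with the $4-5=-1$ coefficient supplying the contraction and the additive $16\eta^3\Delta^0_0$ error controlled by iterating the hypothesis into a geometric-series a priori bound. The only substantive (and slightly sharper) deviation is your exact cross-term cancellation at $t=2$ giving $\Delta^i_2=\Delta^i_0+\eta^2\Delta^{i+1}_0$, where the paper settles for the triangle inequality; both yield $(1+\eta)^2\Delta^i_0$.
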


\begin{proof}
Eq.~\eqref{eq:target condition 1} holds trivially as under our initialization $x_1=x_0$ and $y_1=y_0$. Eq.~\eqref{eq:target condition 1.5} is also easy to show by noticing the following. Given our initialization:
\begin{align*}
x_2=x_0-\eta A y_0\\
y_2=y_0+\eta A x_0
\end{align*}
Hence (using $j=i \mod 2$):
\begin{align}
M_iA^Tx_2=M_iA^Tx_0 - \eta M_iA^TA y_0\\~~~~~~~~~~~~\Rightarrow~~||M_iA^Tx_2||_2 &\le ||M_iA^Tx_0||_2+\eta ||M_iA^TA y_0||_2\\
&= ||M_iA^Tx_0||_2+\eta ||A^j(A^T)^{1-j}N_iA y_0||_2\\
&\le ||M_iA^Tx_0||_2+\eta \lambda_{\infty} ||N_iA y_0||_2\\
&\le ||M_iA^Tx_0||_2+\eta  ||N_iA y_0||_2 \label{eq:kourasi1}
\end{align}
Similarly:
\begin{align}
||N_iAy_2||_2 &\le ||N_iAy_0||_2+\eta  ||M_iA^T x_0||_2 \label{eq:kourasi2}
\end{align}
It follows from~\eqref{eq:kourasi1} and~\eqref{eq:kourasi2} that
$$\Delta^i_2 \le (1+\eta)^2 \Delta^i_0.$$

We use induction on $t$ to prove~\eqref{eq:target condition 2}. We start our proof by showing the inductive step, and postpone establishing the basis of our induction to the end of this proof. For the inductive step, we assume that $H(i,\tau)$ holds for all $i \ge 0$ and $1\le \tau < t$, for some $t>3$. Assuming this, we show next that $H(i,t)$ holds for all $i$. To do this, we make use of a few lemmas, whose proofs are given in Appendix~\ref{appendix:omitted proofs}. 
	
    \begin{lemma} \label{lemma:restated bound} Under the conditions of the theorem, for all $i \ge 0, t \ge 2$:
		%$$\Delta^i_t - \Delta^i_{t-1} \leq 4\eta^2\Delta_{t-1}^{i+1} -
    %5\eta^2\Delta_{t-2}^{i+1} +2\eta^3(9\Delta_{t-2}^{i+1} + 8\eta^2 \Delta_{t-3}^{i+1} + 8\eta^2 \Delta_{t-4}^{i+1} + 8\eta^2 \Delta_{t-5}^{i+1}).$$
$$\Delta^i_t - \Delta^i_{t-1} \leq 4\eta^2\Delta^{i+1}_{t-1} - 5\eta^2\Delta^{i+1}_{t-2} +
    \eta^3(\Delta_{t-2}^{i+1} + \Delta_{t-4}^{i+1}).$$

    %$$\Delta^i_t - \Delta^i_{t-1} \leq 4\eta^2\Delta_{t-1}^{i+1} -
    %5\eta^2\Delta_{t-2}^{i+1} +
    %\eta^3\((1+\lambda_\infty)\Delta_{t-2}^{i+1} +
    %4\eta\lambda^2_\infty\Delta_{0}^{i+1}\).$$
    \end{lemma}
		
		\begin{lemma} \label{lemma:semi-trivial upper bound}
		Under the conditions of the theorem, for all $i, t \ge 0$: $\Delta_t^{i+1} \le \Delta_t^{i}$.
		\end{lemma}

    \begin{lemma} \label{lemma:lower bound}  Under the conditions of the theorem, for all $i \ge 0, t \ge 0$:
	$$\Delta_{t}^{i+2} \geq \frac{1}{\gamma^2}\Delta_{t}^{i}.$$
    \end{lemma}
    
    Given these lemmas, we show our inductive step. So for $t \ge 4$:
		  \begin{align}
	\Delta^i_t - \Delta^i_{t-1} &\leq 4\eta^2\Delta^{i+1}_{t-1} - 5\eta^2\Delta^{i+1}_{t-2} +
    \eta^3(\Delta_{t-2}^{i+1} + \Delta_{t-4}^{i+1}) \label{eq:derivation1}\\
	&\leq -\eta^2 \Delta_{t-1}^{i+1} + \eta^3(\Delta_{t-2}^{i+1} + \Delta_{t-4}^{i+1})+ 80 \eta^5 \Delta_0^0 \\
	%&\leq \eta^2(2\eta-1)\Delta^{i+1}_{t-2} +
	%4\eta^4\Delta_0^0 +O(\eta^5 \Delta^0_0)\\
	%&= -\eta^2\Delta^{i+1}_{t-2} + 2\eta^3\Delta_{t-2}^{i+1} +O(\eta^5\Delta^0_0)\\
&\leq -\frac{1}{\gamma^2}\eta^2\Delta^{i-1}_{t-1} + \eta^3(\Delta_{t-2}^{i+1} + \Delta_{t-4}^{i+1})+ 80 \eta^5 \Delta_0^0\\
&\leq -\frac{1}{\gamma^2}\eta^2\Delta^{i-1}_{t-1} + \eta^3(\Delta_{t-2}^{0} + \Delta_{t-4}^{0})+ 80 \eta^5 \Delta_0^0 \\
&\leq -\frac{1}{\gamma^2}\eta^2\Delta^{i-1}_{t-1} + \(2\eta^3\Delta_{2}^{0} +2\eta^3 {\gamma^2 \over \eta^2}16\eta^3 \Delta^0_0\)+ 80 \eta^5 \Delta_0^0 \\
&\leq -\frac{1}{\gamma^2}\eta^2\Delta^{i-1}_{t-1} + \(2\eta^3(1+\eta)^2 +32\eta^3 {\gamma^2 \over \eta^2}\eta^3  + 80 \eta^5\)\Delta_0^0 \\
&\leq -\frac{1}{\gamma^2}\eta^2\Delta^{i-1}_{t-1} + \(2\eta^3(1+\eta)^2 +11\eta^3 + 80 \eta^5 \)\Delta_0^0 \\
	&\leq -\frac{1}{\gamma^2}\eta^2\Delta^{i-1}_{t-1} + 16\eta^3\Delta_0^0\\
	&\leq -\frac{1}{\gamma^2}\eta^2\Delta^{i}_{t-1} + 16\eta^3\Delta_0^0	\label{eq:derivation6}
    \end{align}

		  %\begin{align}
	%\Delta^i_t - \Delta^i_{t-1} &\leq 4\eta^2\Delta_{t-1}^{i+1} -
	%5\eta^2\Delta_{t-2}^{i+1} +
%2\eta^3(9\Delta_{t-2}^{i+1} + 8\eta^2 \Delta_{t-3}^{i+1} + 8\eta^2 \Delta_{t-4}^{i+1} + 8\eta^2 \Delta_{t-5}^{i+1}) \label{eq:derivation1}\\
	%&\leq \eta^2(18\eta - 1)\Delta_{t-2}^{i+1} +O(\eta^5\Delta^0_0) \\
	%%&\leq \eta^2(2\eta-1)\Delta^{i+1}_{t-2} +
	%%4\eta^4\Delta_0^0 +O(\eta^5 \Delta^0_0)\\
	%&= -\eta^2\Delta^{i+1}_{t-2} + 2\eta^3\Delta_{t-2}^{i+1} +O(\eta^5\Delta^0_0)\\
	%&\leq -\frac{1}{\gamma}\eta^2\Delta^{i}_{t-2} + 2\eta^3\Delta_0^0 +O(\eta^5\Delta^0_0)\\
	%&\leq -\frac{1}{\gamma}\eta^2\Delta^{i}_{t-1} + O(\eta^3\Delta_0^0)	\label{eq:derivation6}
    %\end{align}
    %%\begin{align}
	%\Delta^i_t - \Delta^i_{t-1} &\leq 4\eta^2\Delta_{t-1}^{i+1} -
	%5\eta^2\Delta_{t-2}^{i+1} +
	%\eta^3\((1+\lambda_\infty)\Delta_{t-2}^{i+1} +
	%4\eta\lambda^2_\infty\Delta_{0}^{i+1}\) \label{eq:derivation1}\\
	%&\leq \eta^2((1+\lambda_\infty)\eta - 1)\Delta_{t-2}^{i+1} +
	%4\eta^4\lambda_\infty^2\Delta_0^{i+1} +O(\eta^5\Delta^0_0) \\
	%&\leq \eta^2(2\eta-1)\Delta^{i+1}_{t-2} +
	%4\eta^4\Delta_0^0 +O(\eta^5 \Delta^0_0)\\
	%&= -\eta^2\Delta^{i+1}_{t-2} + \eta^3(2\Delta_{t-2}^{i+1}+4\eta\Delta_0^0) +O(\eta^5\Delta^0_0)\\
	%&\leq -\frac{1}{\gamma}\eta^2\Delta^{i}_{t-2} + 6\eta^3\Delta_0^0 +O(\eta^5\Delta^0_0)\\
	%&\leq -\frac{1}{\gamma}\eta^2\Delta^{i}_{t-1} + 6\eta^3\Delta_0^0 +O(\eta^4\Delta^0_0)	\label{eq:derivation6}
    %\end{align}
where for the first inequality we used Lemma~\ref{lemma:restated bound}, 
for the second inequality we used that $\Delta_{t-1}^{i+1} \le \Delta_{t-2}^{i+1}+16\eta^3\Delta^0_0$ (which is implied by the induction hypothesis), 
for the third inequality we used Lemma~\ref{lemma:lower bound},
for the fourth inequality we used Lemma~\ref{lemma:semi-trivial upper bound},
for the fifth inequality we applied the induction hypothesis iteratively, for the sixth inequality we used Eq.~\eqref{eq:target condition 1.5}, for the seventh and eighth inequality we used that $\eta$ is small enough, and for the last inequality we used Lemma~\ref{lemma:semi-trivial upper bound}.
%
 %and that $\Delta_{0}^{i+1} \le \Delta_{0}^{0}$ (which also easily follows from the fact that $\lambda_\infty \le 1$), 
%for the fourth inequality we used Lemma~\ref{lemma:lower bound} and that $\Delta^{i+1}_{t-2} \le \Delta_0^0 + O(\eta^2\Delta^0_0)$, which follows by first noting that $\Delta^{i+1}_{t-2} \le \Delta^{0}_{t-2}$ (which easily follows from the fact that $\lambda_\infty \le 1$) and then noting that $\Delta^{0}_{t-2}\le \Delta^{0}_{0}+O(\eta^2\Delta^0_0)$ (which follows by iteratively applying the inductive hypothesis and using~\eqref{eq:target condition 1}), and for the last inequality we used that $\Delta^{i}_{t-1}\le \Delta^{i}_{t-2}+O(\eta^3\Delta^0_0)$, which follows from the inductive hypothesis, and that $\eta < \gamma$. 
Hence:
$$\Delta^i_t \leq \(1-{\eta^2 \over \gamma^2 }\)\Delta^i_{t-1} + 16\eta^3\Delta^0_0.$$
This completes the proof of our inductive step.

It remains to show the basis of the induction, namely that $H(i,3)$ holds for all $i \in \mathbb{N}$. From Lemma~\ref{lemma:restated bound} we have:
 \begin{align}
	\Delta^i_3 - \Delta^i_{2} &\leq 4\eta^2\Delta^{i+1}_{2} - 5\eta^2\Delta^{i+1}_{1} +
    \eta^3(\Delta_{1}^{i+1} + \Delta_{-1}^{i+1})\\
 &\leq 4\eta^2\Delta^{i+1}_{2} - 5\eta^2\Delta^{i+1}_{0}+5\eta^3 \Delta^{i+1}_0\\
&= -\eta^2\Delta^{i+1}_{2}+ 5\eta^2(\Delta^{i+1}_{2}-\Delta^{i+1}_{0})+5\eta^3 \Delta^{i+1}_0\\
&\le -\eta^2\Delta^{i+1}_{2}+ 5\eta^3(2+\eta)\Delta^{i+1}_{0}+5\eta^3 \Delta^{i+1}_0\\
&= -\eta^2\Delta^{i+1}_{2}+ 5\eta^3(3+\eta)\Delta^{i+1}_{0}\\
 &\le -\eta^2\Delta^{i+1}_{2}+15\eta^3(1+\eta/3)\Delta^0_{0}\\
 &\leq -{\eta^2 \over \gamma^2}\Delta^{i-1}_{2}+15\eta^3(1+\eta/3)\Delta^0_{0}\\
 &\leq -{\eta^2 \over \gamma^2}\Delta^{i}_{2}+15\eta^3(1+\eta/3)\Delta^0_{0},
	\end{align}
where for the second equality we used that $0.5x_{-1}=x_0=x_1$ and $0.5y_{-1}=y_0=y_1$ (which follow from our initialization),
for the third inequality we used that~\eqref{eq:target condition 1.5},
for the fourth inequality we used Lemma~\ref{lemma:semi-trivial upper bound}, 
for the fifth inequality we used Lemma~\ref{lemma:lower bound}, 
and for the last inequality we used Lemma~\ref{lemma:semi-trivial upper bound}. Hence, for small enough $\eta$, we have:
$$\Delta^i_3 \le \(1-{\eta^2 \over \gamma^2}\) \Delta^i_{2} + 16 \eta^3 \Delta^0_0.$$
%To do this, we follow the  derivation in lines~\eqref{eq:derivation1}-\eqref{eq:derivation6} above, noticing that what we needed for this derivation to go through holds for $t=2$. In particular, the first inequality uses Lemma~\ref{lemma:restated bound}, which holds for $t=2$. The second inequality goes through because $\Delta_{1}^{i+1} = \Delta_{0}^{i+1}$, for all $i$, given~\eqref{eq:target condition 1}. The third inequality goes through for the same reasons that were used in the induction step. The fourth inequality goes through since $\Delta_{0}^{i+1} \geq \frac{1}{\gamma}\Delta_{0}^{i}$, which holds from Lemma~\ref{lemma:lower bound}, and $\Delta^{i+1}_{0}\le \Delta^{0}_{0}$, which holds since $\lambda_{\infty}\le 1$. The last inequality follows from the fact that $\Delta^{i}_{1} = \Delta^{i}_{0}$.
    \end{proof}

\begin{corollary} \label{cor:gradient becomes small}
Under the conditions of Theorem~\ref{thm:convergence of OGD}, $\Delta^0_t \equiv \normlt{A^Tx_t} + \normlt{Ay_t} \rightarrow O(\eta \gamma^2 \Delta^0_0)$ as $t \rightarrow +\infty$. In particular, for large enough $t$, the last iterate of OMD is within $O\(\sqrt{\eta} \cdot \gamma \sqrt{\Delta^0_0}\)$ distance from the space of equilibrium points of the game, where $\sqrt{\Delta^0_0}$ is the distance of the initial point $(x_0,y_0)$ from the equilibrium space, and where both distances are taken with respect to the norm $\sqrt{x^T A A^T x + y^T A^T A y}$.
%In particular, for large enough $t$, $x_t$ and $y_t$ are approximately optimal in the following sense:
%\begin{align}
%x_t^T A y_t \le \min_x x^T A y_t + O(\eta^2\Delta^0_0);\\
%x_t^T A y_t \ge \max_y x^T A y + O(\eta^2\Delta^0_0).
%\end{align}
\end{corollary}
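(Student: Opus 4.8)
The plan is to derive the Corollary directly from the master recursion of Theorem~\ref{thm:convergence of OGD} specialized to $i=0$, and then to reinterpret the resulting bound on $\Delta^0_t$ geometrically as a distance to the equilibrium set.

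First I would instantiate Theorem~\ref{thm:convergence of OGD} at $i=0$, obtaining for every $t\ge 3$ the scalar recursion $\Delta^0_t \le c\,\Delta^0_{t-1} + b$, with $c = 1-\eta^2/\gamma^2$ and $b = 16\eta^3\Delta^0_0$. Before solving it I would verify $c\in(0,1)$: since $\|A\|=\lambda_\infty\le 1$, every nonzero singular value of $A$ is at most $1$, hence the nonzero eigenvalues of $AA^T$ are at most $1$ and $\gamma=\|(AA^T)^+\|\ge 1$; combined with $\eta<1/(3\gamma^2)$ this forces $\eta^2/\gamma^2<1/9$, so $c>8/9>0$. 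The recursion is then a contraction toward the fixed point $a^\ast = b/(1-c) = 16\eta^3\Delta^0_0/(\eta^2/\gamma^2) = 16\,\eta\gamma^2\Delta^0_0$. Subtracting $a^\ast$ from both sides gives $\Delta^0_t - a^\ast \le c\,(\Delta^0_{t-1}-a^\ast)$, and unrolling from the base index $t=3$ yields $\Delta^0_t - a^\ast \le c^{\,t-3}(\Delta^0_3-a^\ast)$. Since $0<c<1$ and $\Delta^0_t\ge 0$, taking $t\to\infty$ gives $\limsup_{t}\Delta^0_t \le a^\ast = O(\eta\gamma^2\Delta^0_0)$, which is the first assertion.

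For the distance statement I would use the observation that the norm $\|(x,y)\|_\ast=\sqrt{x^TAA^Tx+y^TA^TAy}$ satisfies $\|(x,y)\|_\ast^2=\|A^Tx\|_2^2+\|Ay\|_2^2$, so that $\|(x_t,y_t)\|_\ast^2=\Delta^0_t$, and that the equilibrium set $E=\{(x,y):A^Tx=0,\ Ay=0\}$ is exactly the kernel of $\|\cdot\|_\ast$. Consequently, for every $(x^\ast,y^\ast)\in E$ the subtracted components are annihilated, $\|(x_t,y_t)-(x^\ast,y^\ast)\|_\ast^2=\|A^T x_t\|_2^2+\|A y_t\|_2^2=\Delta^0_t$, independently of the chosen equilibrium point; hence the distance from $(x_t,y_t)$ to $E$ in this norm is exactly $\sqrt{\Delta^0_t}$, and likewise the distance of $(x_0,y_0)$ from $E$ is $\sqrt{\Delta^0_0}$. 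Combining with the first part, $\sqrt{\Delta^0_t}$ is eventually $O(\sqrt{\eta}\,\gamma\sqrt{\Delta^0_0})$, as claimed.

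The proof is mostly bookkeeping once Theorem~\ref{thm:convergence of OGD} is available; the one conceptual point I would handle carefully is the geometric identity that $E$ coincides with the kernel of $\|\cdot\|_\ast$, which is what makes ``distance to the equilibrium set'' collapse to $\sqrt{\Delta^0_t}$. I would also record that the dynamics keep $x_t\in\mathcal{R}(A)$ and $y_t\in\mathcal{R}(A^T)$ (the increments lie in these subspaces, as does the initialization), so that on the iterate subspace $\|\cdot\|_\ast$ is a genuine norm vanishing only at the origin; this makes the distance-to-$E$ interpretation equivalent to convergence of $(x_t,y_t)$ toward the unique in-subspace equilibrium.
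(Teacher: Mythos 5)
Your proposal is correct and follows essentially the same route as the paper: instantiate the recursion of Theorem~\ref{thm:convergence of OGD} at $i=0$, solve it (the paper unrolls it into a geometric series, you pass through the fixed point $a^\ast=16\eta\gamma^2\Delta^0_0$ --- an equivalent computation), and then identify the equilibrium set with the kernel of the seminorm $\sqrt{x^TAA^Tx+y^TA^TAy}$ so that the distance to it is exactly $\sqrt{\Delta^0_t}$. Your explicit verification that $\gamma\ge 1$ and your spelled-out argument that the distance to the equilibrium set equals $\sqrt{\Delta^0_t}$ are details the paper leaves implicit, but the substance is the same.
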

\begin{prevproof}{Corollary}{cor:gradient becomes small}
It follows from~\eqref{eq:target condition 1},~\eqref{eq:target condition 1.5} and~\eqref{eq:target condition 2} that:
\begin{align*}
\Delta^0_t &\le \(1-{\eta^2 \over \gamma^2}\)^{t-2}  (1+\eta)^2 \Delta^0_0 + 16 \sum_{t=0}^{\infty}\(1-{\eta^2 \over \gamma^2}\)^t\eta^3 \Delta^0_0 \\&= \(1-{\eta^2 \over \gamma^2}\)^{t-2}  (1+\eta)^2 \Delta^0_0 + O\(\eta \gamma^2 \Delta^0_0\),
\end{align*}
which shows the first part of our claim. For the second part of our claim recall that the solutions to~\eqref{eq:our minmax} are all pairs $(x,y)$ such that $x$ is in the null space of $A^T$ and $y$ is in the null space of $A$. 
%For our second claim let us pick a $t$ such that $(1-\eta)^t=\eta^2$. For such $t$, $\Delta^0_t = O\(\eta^2 \Delta^0_0\)$.
\end{prevproof}

\subsection{General Bilinear Case} \label{app:proof of general case minmax}

\begin{theorem}\label{theorem:general}
    Consider OMD for the min-max problem~\eqref{eq:general inf sup}:
		\begin{align*}
    \inf_{x} \sup_{y} \left(x^TAy + b^Tx + c^Ty + d\right). \label{eq:general inf sup}
\end{align*}
Under the same conditions as Corollary~\ref{cor:gradient becomes small} and whenever~\eqref{eq:general inf sup} is finite, OMD exhibits last iterate convergence in the same sense as in Corollary~\ref{cor:gradient becomes small}. In particular, for large enough $t$, the last iterate of OMD is within $O\(\sqrt{\eta} \cdot \gamma \sqrt{\Delta^0_0}\)$ distance from the space of equilibrium points of the game, where $\sqrt{\Delta_0}$ is the distance of the point $(x_0+(A^T)^+c,y_0+A^+b)$ from the equilibrium space, and where both distances are taken with respect to the norm $\sqrt{x^T A A^T x + y^T A^T A y}$. Whenever~\eqref{eq:general inf sup} is infinite or undefined, the OMD dynamics travels to infinity and we characterize its motion.
\end{theorem}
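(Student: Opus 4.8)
The plan is to reduce the general game~\eqref{eq:general inf sup} to the homogeneous bilinear game $\min_x\max_y x^TAy$ already handled by Theorem~\ref{thm:convergence of OGD} and Corollary~\ref{cor:gradient becomes small}, via an affine change of variables that cancels the linear terms $b,c$. Writing out the OMD iteration for~\eqref{eq:general inf sup}, whose gradients are $\nabla_x=Ay+b$ and $\nabla_y=A^Tx+c$, gives
\begin{equation*}
x_t = x_{t-1} - 2\eta(Ay_{t-1}+b) + \eta(Ay_{t-2}+b), \qquad y_t = y_{t-1} + 2\eta(A^Tx_{t-1}+c) - \eta(A^Tx_{t-2}+c).
\end{equation*}
I would then substitute the shifted variables $\hat x_t = x_t + (A^T)^+c$ and $\hat y_t = y_t + A^+b$ and check that they satisfy \emph{exactly} the pure dynamics~\eqref{eq:OGD bilinear x}--\eqref{eq:OGD bilinear y}. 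Using the pseudoinverse identities $A^T(A^T)^+c = c$ and $AA^+b = b$ --- which hold precisely when $c\in\mathcal{R}(A^T)$ and $b\in\mathcal{R}(A)$ --- the stray $\pm\eta b$ and $\pm\eta c$ terms telescope away, leaving the homogeneous recursion in the hatted variables. The point governing the rate is then exactly $(\hat x_0,\hat y_0) = (x_0+(A^T)^+c,\,y_0+A^+b)$, as stated.

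Next I would verify the hypotheses of Theorem~\ref{thm:convergence of OGD} for the hatted system. Since $\mathcal{R}((A^T)^+)=\mathcal{R}(A)$ and $\mathcal{R}(A^+)=\mathcal{R}(A^T)$, the shift vectors $(A^T)^+c$ and $A^+b$ lie in $\mathcal{R}(A)$ and $\mathcal{R}(A^T)$ respectively; hence choosing $x_0\in\mathcal{R}(A)$, $y_0\in\mathcal{R}(A^T)$ gives $\hat x_0\in\mathcal{R}(A)$, $\hat y_0\in\mathcal{R}(A^T)$, and the column-space--preserving structure of the homogeneous updates keeps $\hat x_t\in\mathcal{R}(A)$, $\hat y_t\in\mathcal{R}(A^T)$ for all $t$. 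To reproduce the ``no-op first step'' initialization $\hat x_{-1}=2\hat x_0$, $\hat y_{-1}=2\hat y_0$ used in the base case of that theorem, I would initialize the original dynamics at $x_{-1}=2x_0+(A^T)^+c$, $y_{-1}=2y_0+A^+b$. Corollary~\ref{cor:gradient becomes small} then applies verbatim to $(\hat x_t,\hat y_t)$.

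I would then transfer the conclusion back. The equilibria of~\eqref{eq:general inf sup} are the pairs with $Ay^*=-b$, $A^Tx^*=-c$, and under the same two range conditions the shift maps this set bijectively onto the pure equilibrium set $\{A^T\hat x=0,\ A\hat y=0\}$ (again via $AA^+b=b$, $A^T(A^T)^+c=c$). Because a translation is an isometry for the seminorm $\sqrt{x^TAA^Tx+y^TA^TAy}$, the distance from $(x_t,y_t)$ to the general equilibrium set equals the distance from $(\hat x_t,\hat y_t)$ to the pure one, so the $O(\sqrt{\eta}\,\gamma\sqrt{\Delta_0})$ bound carries over with $\Delta_0=\|A^T\hat x_0\|_2^2+\|A\hat y_0\|_2^2$, yielding precisely the stated conclusion.

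Finally, for the degenerate case I would first show that finiteness of~\eqref{eq:general inf sup} is equivalent to $b\in\mathcal{R}(A)$ and $c\in\mathcal{R}(A^T)$: fixing $x$, the inner $\sup_y$ is $+\infty$ unless $A^Tx+c=0$, which is solvable only if $c\in\mathcal{R}(A^T)$, and minimizing the residual form $b^Tx$ over the resulting affine set is bounded iff $b\perp\mathcal{N}(A^T)=\mathcal{R}(A)^\perp$. When one condition fails, I would decompose $b=b_\parallel+b_\perp$ and $c=c_\parallel+c_\perp$ along $\mathcal{R}(A)\oplus\mathcal{N}(A^T)$ and $\mathcal{R}(A^T)\oplus\mathcal{N}(A)$; projecting the updates onto $\mathcal{N}(A^T)$ and $\mathcal{N}(A)$ annihilates every $A$- and $A^T$-term and leaves $P_{\mathcal{N}(A^T)}x_t = P_{\mathcal{N}(A^T)}x_{t-1}-\eta b_\perp$ and $P_{\mathcal{N}(A)}y_t = P_{\mathcal{N}(A)}y_{t-1}+\eta c_\perp$, i.e.\ a linear drift to infinity whenever $b_\perp\neq0$ or $c_\perp\neq0$, while the complementary projections still obey the convergent hatted recursion driven by $b_\parallel,c_\parallel$. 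This is the promised characterization of the motion. The main subtlety throughout is not a new estimate but making the reduction \emph{exact}: the same two range conditions must simultaneously cancel the linear terms, keep the hatted iterates in the required column spaces, preserve the base-case initialization, and align the two equilibrium sets, so that Corollary~\ref{cor:gradient becomes small} can be invoked as a black box.
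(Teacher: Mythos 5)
Your proposal is correct and follows essentially the same route as the paper: an affine change of variables (with shifts $(A^T)^+c$ and $A^+b$, which coincide with the paper's $b_3,c_3$ defined by $A^Tb_3=c_1$, $Ac_3=b_1$) reduces the general game exactly to the homogeneous dynamics of Theorem~\ref{thm:convergence of OGD}, after which Corollary~\ref{cor:gradient becomes small} is invoked as a black box. The only cosmetic difference is that the paper absorbs the null-space drift $\pm\eta t\,b_2,\ \mp\eta t\,c_2$ directly into a single time-dependent substitution covering both the finite and infinite cases, whereas you separate the finite case from the divergent drift via projection onto $\mathcal{N}(A^T)$ and $\mathcal{N}(A)$ — the two treatments are equivalent.
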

\begin{prevproof}{Theorem}{theorem:general}
Trivially, we need only consider functions of the form $x^TAy + b^Tx + c^Ty$. We consider the following decompositions
of $b$ and $c$: 
\[
    b &= b_1 + b_2 &\text{where}~b_1 \in \mathcal{R}(A), b_2 \in \mathcal{N}(A^T)  \\ 
    c &= c_1 + c_2 &\text{where}~c_1 \in \mathcal{R}(A^T), c_2 \in \mathcal{N}(A) 
		\]
		Given the above we can also define $b_3$ and $c_3$ as follows:
\[
    Ac_3 &= b_1 &\text{ feasible since } b_1 \in \mathcal{R}(A) \\
    A^Tb_3 &= c_1&\text{ feasible since } c_1 \in \mathcal{R}(A^T)  
\]

Then, we can make the following variable substition:
\[
    \alpha_t &= x_t + \eta t b_2 + b_3 \\
    \beta_t &= y_t - \eta t c_2 + c_3 \\
    \text{so that: }& \\
    A^T\alpha_t &= A^Tx_t + \eta t A^T b_2 + A^T b_3 \\
    &= A^Tx_t + c_1~~~~\text{since $b_2 \in \mathcal{N}(A^T)$} \\
    A\beta_t &= Ay_t - \eta t A c_2 + A c_3  \\
    &= Ay_t + b_1~~~~\text{since $c_2 \in \mathcal{N}(A)$}  \\
\]

We also state the OMD dynamics for $x_t$ and $y_t$ for problem~\eqref{eq:general inf sup}:
\[
    x_t &= x_{t-1} - 2\eta (A y_{t-1} + b)  + \eta (A y_{t-2} + b) \\ 
    &= x_{t-1} - 2\eta A y_{t-1} + \eta A y_{t-2} - \eta b \\ 
    y_t &= y_{t-1} + 2\eta (A^T x_{t-1} + c)  - \eta (A^T x_{t-2} + c) \\
    &= y_{t-1} + 2\eta A^T x_{t-1}  - \eta A^T x_{t-2} + \eta c 
\]

Note that given this update step:
\[
    x_{t+1} &= x_{t} - 2\eta A y_t + \eta A y_{t-1} - \eta b \\
    x_{t+1} &= x_{t} - \eta b_2 - 2\eta A y_t + \eta A y_{t-1} - \eta A c_3 \\
    x_{t+1} &= x_t - \eta b_2 - 2\eta A (y_t + c_3) + \eta A (y_{t-1} + c_3) \\
    x_{t+1} &= x_t - \eta b_2 - 2\eta A (y_t - \eta c_2 t + c_3) + \eta A (y_{t-1} - \eta c_2 (t-1) + c_3) \\
    x_{t+1} + \eta b_2 (t+1) &= x_t + \eta b_2 t - 2\eta A (y_t - \eta c_2 t + c_3) + \eta A (y_{t-1} - \eta c_2 (t-1) + c_3) \\
    x_{t+1} + \eta b_2 (t+1) + b_3  &= x_t + \eta b_2 t + b_3 - 2\eta A (y_t -
    \eta c_2 t + c_3) + \eta A (y_{t-1} - \eta c_2 (t-1) + c_3) \\
    \alpha_{t+1} &= \alpha_t - 2\eta A \beta_t + \eta A \beta_{t-1} \\
    \text{Analogously: }& \\
    \beta_{t+1} &= \beta_t + 2\eta A^T \alpha_t - \eta A^T \alpha_{t-1}
\]
Note that these are precisely the dynamics for which we proved convergence in
Theorem~\ref{thm:convergence of OGD-main}. Thus, by invoking Theorem~\ref{thm:convergence of OGD} and Corollary~\ref{cor:gradient becomes small} on the sequence $(\alpha_t,\beta_t)$ and then substituting back $(x_t,y_t)$, we have that for all large enough $t$:
\[
    x_t &= -\eta b_2 t - b_3 + \epsilon_x(t) \\
    y_t &= \eta c_2 t - c_3 + \epsilon_y(t) \\
    &\text{such that } ||A^T\epsilon_x(t)||_2, ||A\epsilon_y(t)||_2 \in O\(\sqrt{\eta} \cdot \gamma \sqrt{\Delta^0_0}\),
\]
where $\Delta^0_0 = ||A^T(x_0+b_3)||_2^2 + ||A (y_0+c3)||_2^2$.

In particular, this shows that, whenever~\eqref{eq:general inf sup} is finite (i.e.~$b_2=c_2=0$), OMD exhibits last iterate convergence. For large enough $t$, the last iterate of OMD is within $O\(\sqrt{\eta} \cdot \gamma \sqrt{\Delta^0_0}\)$ distance from the space of equilibrium points of the game, where $\sqrt{\Delta^0_0}$ is the distance of $(x_0+b_3,y_0+c_3)$ from the equilibrium space in the norm $\sqrt{x^T A A^T x + y^T A^T A y}$. Whenever~\eqref{eq:general inf sup} is infinite or undefined, the OMD dynamics travels to infinity linearly, with fluctuations around the divergence specified as above. 
\end{prevproof}
%\begin{corollary} \label{cor:general bilinear functions}
%    OGD converges as in the last corollary for functions of the form:
%    $$f(x,y) = (x+b)^TA(x+c)$$
%\end{corollary}
%\begin{prevproof}{Corollary}{cor:general bilinear functions}
%    Let $\alpha_t = x_t + b$, and $\beta_t = y_t + c$. Then, note that the
%    function can be written simply as $f(\alpha, \beta) = \alpha^TA\beta$. Thus,
%    by the main result, we have that both $\alpha, \beta \rightarrow 0$, which
%    in turn implies that $x_t \rightarrow b$, $y_t \rightarrow c$ as required.
%\end{prevproof}

%\begin{remark}{}
%    The above corrollary actually holds for any function $f$ of the form:
%    $$f(x, y) = x^TAy + bx + cy + r$$
%    If $b \in \mathcal{R}(A), c \in \mathcal{R}(A^T)$. This is because if these
%    two hold, we can write $d, e$ s.t. $Ae = b$, $A^Td = c$, then:
%    $$f(x, y) = (x+d)^TA(y+e) - (d^Te + r)$$
%    The constant term can be disregarded since it is not present in any of the
%    gradients, and thus this reduces to the corollary above. Note that if one
%    of these conditions does not hold, the optimum may be ill-defined. For
%    example, $A = [1,0;0,0], b = [1,1], c = [1,1]$ is a configuration such that
%    $f(x,y) = x_1y_1 + x_1 + x_2 + y_1 + y_2 + r$; here, $x_2$ and $y_2$ can be
%    set to $+\infty$, $-\infty$ respectively, so the optimum itself is not
%    well-defined.
%\end{remark}

\subsection{Omitted Proofs} \label{appendix:omitted proofs}

\begin{prevproof}{Proposition}{prop:gradient descent unstable}
    To show this, we consider the $\ell_2$ distance of the solution at time $t$.
    First, recall the GD update step in the special case of $f(x, y) = x^Ty$:
    \[
	x_{t} = x_{t-1} - \eta y_{t-1} \\
	y_{t} = y_{t-1} + \eta x_{t-1}
    \]

    Then, note that the squared $\ell_2$ distance of the running iterate $(x_t,y_t)$ to the unique equilibrium solution $(0,0)$  is given
    by $d(t) := ||x_t||_2^2 + ||y_t||_2^2$, which we can calculate:
    \[
	||x_t||_2^2 &= ||x_{t-1}||^2_2 - 2\eta x_{t-1}^Ty_{t-1} +
	\eta^2||y_{t-1}||_2^2 \\
	||y_{t-1}||_2^2 &= ||y_{t-1}||^2_2 + 2\eta y_{t-1}^T x_{t-1} +
	\eta^2||x_{t-1}||_2^2 \\
	\therefore d(t) &= d(t-1) + \eta^2 d(t-1) \\
	&= (1+\eta^2)d(t-1)
    \]
    This indicates that for any value of $\eta>0$, the running iterate of 
    GD \textit{diverges} from the equilibrium.
\end{prevproof}

\begin{prevproof}{Claim}{claim:pushing A's around}
For our first claim, observe that:
\[
    \innerab{Au}{Av}{i} &= u^TA^TAM_i^TM_iA^TAv \\
			&= u^TA^TA(A^TA)^k(A^j)^TA^j(A^TA)^kA^TAv \\
   &= u^TA^T(AA^T)^kA(A^j)^TA^jA^T(AA^T)^kAv \\
   &= u^TA^T[(AA^T)^kA(A^j)^T][A^jA^T(AA^T)^k]Av \\
   &= u^TA^TN_{i+1}^TN_{i+1}Av \\
   &= \innerac{u}{v}{i+1}
\]
Our second claim, $\innerac{A^Tu}{A^Tv}{i} = \innerab{u}{v}{i+1}$, is proven analogously.

For our third claim:
\[
    \innerab{u}{Av}{i} &= u^TAM_i^TM_iA^TAv \\
		       &= u^TA(A^TA)^{k}(A^TA)^j(A^TA)^kA^TAv \\
    \text{if $j = 0$: }&  \\
		       &= u^TA(A^TA)^{k}(A^TA)^kA^TAv \\
	 &= u^TA[A^T(AA^T)^k][(AA^T)^kA]v \\
  &= u^TA [A^T N_{i}^T][N_{i}A]v \\
  &= \innerac{v}{A^Tu}{i}\\
    \text{otherwise: }& \\
		      &= u^TA(A^TA)^kA^TA(A^TA)^kA^TAv \\
					&= u^TA[(A^TA)^kA^TA][(A^TA)^kA^TA]v \\
					&= u^TA[A^T(AA^T)^kA][A^T(AA^T)^kA]v \\
										&= u^TA[A^TN_i^T][N_iA]v \\
 &= \innerac{v}{A^Tu}{i} 
\]
\end{prevproof}

\begin{prevproof}{Lemma}{lemma:first bound}
First, we note the following scaled update rule:
\[
    M_{i}A^Tx_t &= M_i\(\xtao\) \\
    N_{i}Ay_t &= N_i\(\ytao\)
\]

\noindent Then, taking the norm of both sides, and using the statements of Claim~\ref{claim:pushing A's around}:
\[
    \normab{x_t}{i} &= \normab{x_{t-1}}{i} + 4\eta^2\normac{y_{t-1}}{i+1} +
    \eta^2\normac{y_{t-2}}{i+1} - 4\eta\innerab{x_{t-1}}{Ay_{t-1}}{i} \\
    &\qquad+ 2\eta\innerab{x_{t-1}}{Ay_{t-2}}{i} -
    4\eta^2\innerac{y_{t-1}}{y_{t-2}}{i+1} \\[2ex]
    \normac{y_t}{i} &= \normac{y_{t-1}}{i} + 4\eta^2\normab{x_{t-1}}{i+1} +
    \eta^2\normab{x_{t-2}}{i+1} + 4\eta\innerac{y_{t-1}}{A^Tx_{t-1}}{i} \\
    &\qquad+ 2\eta\innerac{y_{t-1}}{A^T x_{t-2}}{i} -
    4\eta^2\innerab{x_{t-1}}{x_{t-2}}{i+1} \\[2ex]
    \therefore \Delta_t^i &= \normab{x_t}{i} + \normac{y_t}{i} \\
    &= \Delta^i_{t-1} + 4\eta^2\Delta^{i+1}_{t-1} +
    \eta^2\Delta^{i+1}_{t-2} + 2\eta(\innerab{x_{t-1}}{Ay_{t-2}}{i} -
    \innerac{y_{t-1}}{Ax_{t-2}}{i}) \\
    &\qquad\qquad - 4\eta^2(\innerab{x_{t-1}}{x_{t-2}}{i+1} +
    \innerac{y_{t-1}}{y_{t-2}}{i+1})
\]

\noindent Expanding the first pair of inner products above and using Claim~\ref{claim:pushing A's around} again:
\[
   \innerab{x_{t-1}}{Ay_{t-2}}{i} - \innerac{y_{t-1}}{A^T x_{t-2}}{i} &=
    \innerab{\xtt}{Ay_{t-2}}{i} \\
    &\qquad - \innerac{\ytt}{A^T x_{t-2}}{i} \\[2ex]
    = -2\eta(\normac{y_{t-2}}{i+1}+&\normab{x_{t-2}}{i+1}) + \eta(\innerab{x_{t-2}}{x_{t-3}}{i+1} +
    \innerac{y_{t-2}}{y_{t-3}}{i+1})
\]

Then, multiplying by $2\eta$ and substituting into the previous derivation yields:
\[
    \Delta^i_t - \Delta^i_{t-1} &= 4\eta^2\Delta^{i+1}_{t-1} -
    3\eta^2\Delta^{i+1}_{t-2} + 2\eta^2(\innerab{x_{t-2}}{x_{t-3}}{i+1} +
    \innerac{y_{t-2}}{y_{t-3}}{i+1}) \\
    &\qquad\qquad - 4\eta^2(\innerab{x_{t-1}}{x_{t-2}}{i+1} +
    \innerac{y_{t-1}}{y_{t-2}}{i+1})
\]

\noindent Now, consider the following inner product:
\[
    \innerab{x_{t-2}}{x_{t-1}}{i+1} + \innerac{y_{t-2}}{y_{t-1}}{i+1} &=
    \innerab{x_{t-2}}{\xtt}{i+1} \\
    &\qquad + \innerac{y_{t-2}}{\ytt}{i+1} \\[2ex]
    &= \Delta^{i+1}_{t-2} + \eta\(\innerab{x_{t-2}}{Ay_{t-3}}{i+1} -
    \innerac{y_{t-2}}{A^Tx_{t-3}}{i+1}\)
\]

\noindent Once again, we multiply by $-4\eta^2$ and substitute:
\[
    \Delta^i_t - \Delta^i_{t-1} &= 4\eta^2\Delta^{i+1}_{t-1} -
    7\eta^2\Delta^{i+1}_{t-2} + 2\eta^2(\innerab{x_{t-2}}{x_{t-3}}{i+1} +
    \innerac{y_{t-2}}{y_{t-3}}{i+1}) \\
    &\qquad + 4\eta^3(\innerac{y_{t-2}}{A^Tx_{t-3}}{i+1} -
    \innerab{x_{t-2}}{Ay_{t-3}}{i+1})
\]

Now, we use the update step for time $t-2$. For all $t \geq 1$, this is
well-defined, since $x_{-1}$ and $y_{-1}$ are defined. To ensure that this step
is sound for $t = 0$ requires we define the following, where $X^+$ denotes the
generalized inverse:
\[
    x_{-2} = 4x_0 + \frac{1}{\eta}(A^T)^{+} y_0 \\
    y_{-2} = 4y_0 - \frac{1}{\eta}A^+ x_0
\]
We define these such that: $A^Tx_{-2} = 4A^Tx_0 + \frac{y_0}{\eta}$ and
$Ay_{-2} = 4Ay_0 - \frac{x_0}{\eta}$ (since $x_0 \in R(A)$ and $y_0 \in R(A^T)$,
and thus the following equalities hold:
\[
    x_0 = x_{-1} - 2\eta Ay_{-1} + \eta Ay_{-2} \\
    y_0 = y_{-1} + 2\eta A^T x_{-1} - \eta A^Tx_{-2}
\]

\noindent This allows us to use the following expansion freely for all $t \geq 2$:
\[
    x_{t-2} &= \x{}{-3}{-4} &&\implies x_{t-3} - 2\eta Ay_{t-3} = x_{t-2} - \eta Ay_{t-4} \\
    y_{t-2} &= \y{}{-3}{-4} &&\implies y_{t-3} + 2\eta A^Tx_{t-3} = y_{y-2} + \eta A^Tx_{t-4}
\]

\noindent We can gather the inner product terms and use this update rule to get
our final desired result:
\[
    \Delta^i_t - \Delta^i_{t-1} &= 4\eta^2\Delta^{i+1}_{t-1} -
    7\eta^2\Delta^{i+1}_{t-2} + 2\eta^2(\innerab{x_{t-2}}{x_{t-3} - 2\eta
    Ay_{t-3}}{i+1} + \innerac{y_{t-2}}{y_{t-3} + 2\eta A^Tx_{t-3}}{i+1}) \\[1ex]
    &= 4\eta^2\Delta^{i+1}_{t-1} - 5\eta^2\Delta^{i+1}_{t-2} -
    2\eta^3(\innerab{x_{t-2}}{Ay_{t-4}}{i+1} -
    \innerac{y_{t-2}}{A^Tx_{t-4}}{i+1})
\]
\end{prevproof}

\begin{prevproof}{Lemma}{lemma:restated bound}
\noindent To prove this, first consider the following trivial inequality:
\[
    \normac{y_{t-2} - A^Tx_{t-4}}{i+1} &+
    \normab{x_{t-2} + Ay_{t-4}}{i+1} \\
    &= \normac{y_{t-2}}{i+1} -
    2\innerac{y_{t-2}}{A^Tx_{t-4}}{i+1} + \normac{A^Tx_{t-4}}{i+1} \\
    &\qquad+ \normab{x_{t-2}}{i+1} +
    2\innerab{x_{t-2}}{Ay_{t-4}}{i+1} + \normab{Ay_{t-4}}{i+1} \\
    &\geq 0
\]

\noindent Rearranging:
\[
    2\innerac{y_{t-2}}{A^Tx_{t-4}}{i+1} - 2\innerab{x_{t-2}}{Ay_{t-4}}{i+1}
    &\leq \Delta_{t-2}^{i+1} + \(\normac{A^Tx_{t-4}}{i+1} +
    \normab{Ay_{t-4}}{i+1}\) \\
				&\leq \Delta_{t-2}^{i+1} + \lambda_\infty^2\(\normab{x_{t-4}}{i+1} +
    \normac{y_{t-4}}{i+1}\) \\
    &\leq \Delta_{t-2}^{i+1} +
    \(\lambda_\infty^2\Delta_{t-4}^{i+1}\)  \\
		    &\leq \Delta_{t-2}^{i+1} + \Delta_{t-4}^{i+1}
\]

\noindent Now, we can apply this bound to the result of Lemma~\ref{lemma:first bound}:
\[
    \Delta^i_{t} - \Delta^i_{t-1} &= 4\eta^2\Delta^{i+1}_{t-1} -
    5\eta^2\Delta^{i+1}_{t-2} - 2\eta^3(\innerab{x_{t-2}}{Ay_{t-4}}{i+1} -
    \innerac{y_{t-2}}{A^Tx_{t-4}}{i+1}) \\
    &\leq 4\eta^2\Delta^{i+1}_{t-1} - 5\eta^2\Delta^{i+1}_{t-2} +
    \eta^3(\Delta_{t-2}^{i+1} + \Delta_{t-4}^{i+1})
\]

%\[
    %\Delta^i_{t} - \Delta^i_{t-1} &= 4\eta^2\Delta^{i+1}_{t-1} -
    %5\eta^2\Delta^{i+1}_{t-2} - 2\eta^3(\innerab{x_{t-2}}{Ay_{t-4}}{i+1} -
    %\innerac{y_{t-2}}{A^Tx_{t-4}}{i+1}) \\
    %&\leq 4\eta^2\Delta^{i+1}_{t-1} - 5\eta^2\Delta^{i+1}_{t-2} +
    %2\eta^3(9\Delta_{t-2}^{i+1} + 8\eta^2 \Delta_{t-3}^{i+1} + 8\eta^2 \Delta_{t-4}^{i+1} + 8\eta^2 \Delta_{t-5}^{i+1}) \\
%\]

\noindent Which is what we sought out to prove.
\end{prevproof}

%\begin{prevproof}{Lemma}{lemma:lower bound}
%Now, note that choosing $x_0 \in \mathcal{R}(A) \implies x_t \in
%\mathcal{R}(A) = \mathcal{R}(AA^T)\ \forall\ t$, due to the update step.
%Similarly, $y_t \in \mathcal{R}(A^T)$. Thus, $x_t = AA^T(AA^T)^{+}x_t$ and
%$y_t = A^TA(A^TA)^{+}y_t$, for all $t$. Letting $Q = (AA^T)^{+}$ and $P =
%(A^TA)^{+}$, and recalling key properties of the generalized inverse: 
%\[
    %\Delta_t^{i} &= \norm{M_{i}A^Tx_t}{2} + \norm{N_{i}Ay_t}{2} \\
       %&= \norm{M_iA^TAA^TQx_t}{2} +
    %\norm{N_iAA^TAPy_t}{2}\\
    %&= \norm{M_{i+2}A^TQx_t}{2} +
	%\norm{N_{i+2}APy_t}{2} \\ 
    %&= \norm{(A^T)^jQ(AA^T)^{k+1}x_t}{2} +
	%\norm{A^jP(A^TA)^{k+1}y_t}{2} \\
    %&= \begin{cases}
	%\norm{QM_{i+1}A^Tx}{2} + \norm{PN_{i+1}Ay_t}{2} &\text{ if $j = 0$}\\[0.5ex]
	%\norm{PM_{i+1}A^Tx}{2} + \norm{QN_{i+1}Ay_t}{2} &\text{ if $j = 1$}
    %\end{cases} \\
    %&\leq \max \(||Q||, ||P||\)\cdot \Delta_t^{i+1}
%\]
%\end{prevproof}

		\begin{prevproof}{Lemma}{lemma:semi-trivial upper bound}
		Suppose $j=i \mod 2$ and $k=(i-j)/2$. Notice the following identities:
		\begin{align*}
		&M_i = A^j(A^TA)^k, N_i = \(A^T\)^j \(AA^T\)^k\\
		&M_{i+1} = (A^T)^jA(A^TA)^k, N_{i+1} = A^j A^T\(AA^T\)^k
		\end{align*}
		Now:
		\begin{align*}
		\Delta^{i+1}_t &= \normlt{N_{i+1}Ay_t} + \normlt{M_{i+1}A^Tx_t}\\
		&=\normlt{A^j A^T\(AA^T\)^kAy_t} + \normlt{(A^T)^jA(A^TA)^kA^Tx_t}\\
		&\le \lambda_{\infty}^2\(\normlt{(A^T)^j\(AA^T\)^kAy_t} + \normlt{A^j(A^TA)^kA^Tx_t}\)\\
		&\le \lambda_{\infty}^2\(\normlt{N_iAy_t} + \normlt{M_iA^Tx_t}\)\\
		&\le \Delta^{i}_t,
		\end{align*}
		where for the last inequality we used that $\lambda_{\infty}\le 1$.
		\end{prevproof}
		
\begin{prevproof}{Lemma}{lemma:lower bound}
Given our initialization, $x_0 \in \mathcal{R}(A)$. This implies $x_t \in
\mathcal{R}(A), \forall\ t$, due to the update step of the dynamics. Recalling key properties of the matrix pseudoinverse, this implies: $x_t \equiv A A^+ x_t = A A^T (A A^T)^+ x_t$, for all $t$.
Similarly, given our initialization, $y_t \in \mathcal{R}(A^T)$, for all $t$, which implies $y_t \equiv A^T (A^T)^+ y_t = A^T A (A^T A)^+ y_t$, for all $t$. Letting $Q = (AA^T)^{+}$ and $P =
(A^TA)^{+}$, we get the following (where $j = i \mod 2$ and $k= (i-j)/2$): 
\[
    \Delta_t^{i} &= \norm{M_{i}A^Tx_t}{2} + \norm{N_{i}Ay_t}{2} \\
       &= \norm{M_iA^TAA^TQx_t}{2} +
    \norm{N_iAA^TAPy_t}{2}\\
    &= \norm{M_{i+2}A^TQx_t}{2} +
	\norm{N_{i+2}APy_t}{2} \\ 
	&= \norm{A^j (A^TA)^{k+1}A^T (AA^T)^{+} x_t}{2} +
	\norm{(A^T)^j (AA^T)^{k+1}A(A^TA)^+y_t}{2} \\ 
    &= \norm{A^j (A^TA)^{+} A^T (AA^T)^{k+1}x_t}{2} +
	\norm{(A^T)^j (AA^T)^{+} A(A^TA)^{k+1}y_t}{2} \\
    &= \begin{cases}
	\norm{(A^TA)^{+} M_{i+2}A^Tx_t}{2} + \norm{(AA^T)^{+}N_{i+2}Ay_t}{2} &\text{ if $j = 0$}\\[0.5ex]
	\norm{(AA^T)^{+}M_{i+2}A^Tx_t}{2} + \norm{(A^TA)^{+} N_{i+2}Ay_t}{2} &\text{ if $j = 1$}
    \end{cases} \\
    &\leq \max \(||Q||, ||P||\)^2\cdot \Delta_t^{i+2},
\]
where for the fourth and fifth equality we used the following key property of pseudo-inverses: $A^+=(A^TA)^+A^T=A^T(AA^T)^+$.
\end{prevproof}

\section{DNA-Generation WGAN Architecture}\label{sec:appendix-dna-arch}\label{sec:apdxdna}
\begin{table}[H]
\label{table:arch}
\begin{center}
\begin{tabular}{lllll}
\multicolumn{1}{c}{\bf Operation}  &\multicolumn{1}{c}{\bf Kernel} &\multicolumn{1}{c}{\bf Output Shape} &\multicolumn{1}{c}{\bf BatchNorm?} &\multicolumn{1}{c}{\bf Nonlinearity}
\\ \hline \\
Length of DNA sequence: $L = 6$ \\
Gradient penalty: $\lambda=1e^{-4}$\\
Batch size: $512$\\
$G(z):$ \\
$z$ & - & 50 & - & - \\
Fully connected         & - & 128 & no & tanh\\
Fully connected             & - & $16\times \frac{L}{2}$  & yes & tanh \\
Reshape  & - & $ 16 \times 1 \times \frac{L}{2}$ & - & -\\
Upsampling by 2 & - & $16 \times 1 \times L$ & - & - \\
 Convolution             & $[1\times 3]\times 4$ & $4\times1\times L$ & no & tanh  \\
 $D(x):$ \\
$x$   & - & $4 \times  1 \times L$  &- &-\\
  Convolution             & $[1\times 3] \times 16$ &  $16\times1\times L$ & no & tanh  \\
  Fully connected             & - & 32  & no & tanh \\
  Fully connected             & - & 1  & no & linear \\
\end{tabular}
\end{center}
\end{table}

\section{CIFAR10 WGAN Architecture}\label{sec:appendix-cifar10-arch}
\begin{table}[H]
\label{table:arch}
\begin{center}
\begin{tabular}{lllll}
\multicolumn{1}{c}{\bf Operation}  &\multicolumn{1}{c}{\bf Kernel} &\multicolumn{1}{c}{\bf Output Shape} &\multicolumn{1}{c}{\bf BatchNorm?} &\multicolumn{1}{c}{\bf Nonlinearity}
\\ \hline \\
Gradient penalty: $\lambda=10$\\
Batch size: 64 \\
$G(z):$ \\
$z$ & - & 100 & - & - \\
Fully connected         & - & 1024 & no & LeakyReLU\\
Fully connected             & - & $8192$  & yes & LeakyReLU \\
Reshape  & - & $ 128 \times 8 \times 8 $ & - & -\\
TransposedConv             & $[5\times 5]\times 128$ & $128\times16\times 16$ & yes & LeakyReLU  \\
Convolution             & $[5\times 5]\times 64$ & $64\times16\times 16$ & yes & LeakyReLU  \\
TransposedConv             & $[5\times 5]\times 64$ & $64\times32\times 32$ & yes & LeakyReLU  \\
Convolution             & $[5\times 5]\times 3$ & $3\times32\times 32$ & no & tanh  \\
 
 $D(x):$ \\
$x$   & - & $3 \times  32 \times 32$  &- &-\\
  Convolution             & $[5\times 5] \times 64$ &  $64\times32\times32$ & no & LeakyReLU  \\
   Convolution             & $[5\times 5] \times 128$ &  $128\times14\times 14$ & no & LeakyReLU  \\
    Convolution             & $[5\times 5] \times 128$ &  $128\times 7\times 7$ & no & LeakyReLU  \\
  Fully connected             & - & 1024  & no & LeakyReLU \\
  Fully connected             & - & 1  & no & linear \\
\end{tabular}
\end{center}
\end{table}

\newpage

\section{CIFAR10 Generator Image Samples}\label{sec:appendix-cifar10}
\begin{figure}[H]
    \centering  
    \begin{subfigure}[b]{1\textwidth}
        \centering
    		\includegraphics[]{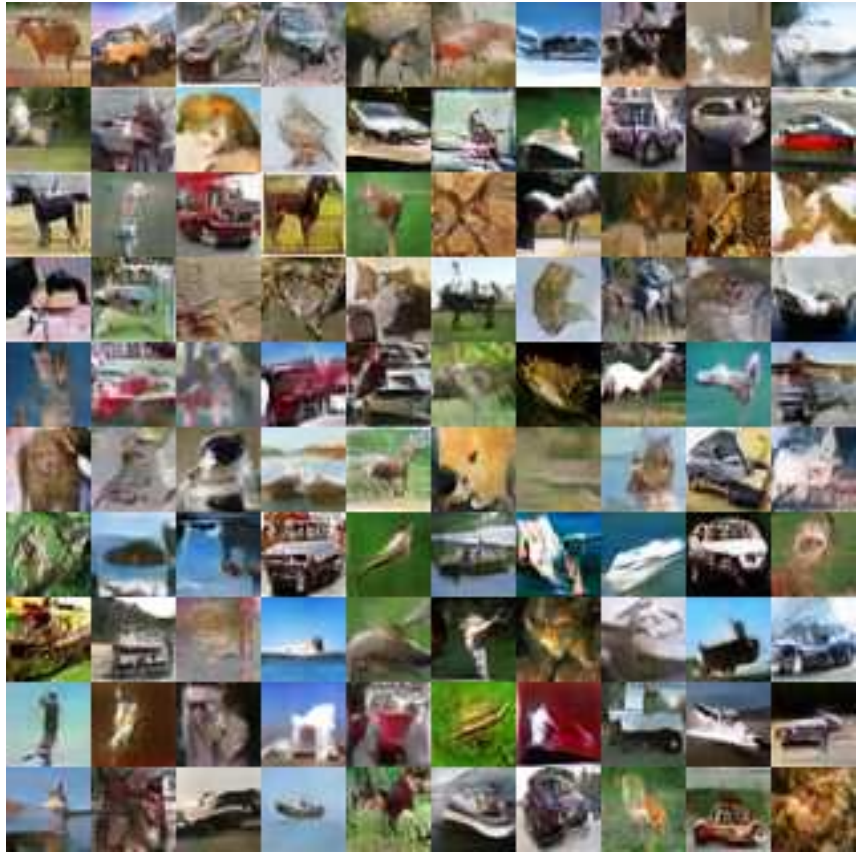}
    	\caption{Sample of images from Generator of Epoch $94$, which had the highest inception score.}
    \end{subfigure}
    \caption{Samples of images from Generator trained via Optimistic Adam on CIFAR10.}\label{fig:optimistic-Adam}
\end{figure}

\subsection{Comparison of Early Epoch Images of Optimistic Adam vs Adam}
Below we give samples of images from an early epoch $19$ Generator trained via Optimistic Adam with 1:1 training ratio, Adam with 1:1 and Adam with 5:1 ratio on CIFAR10. We see that Optimistic Adam has already achieved visually appealing results unlike the latter two vanilla Adam based versions.

\begin{figure}[H]
        \centering
    		\includegraphics[height=4in]{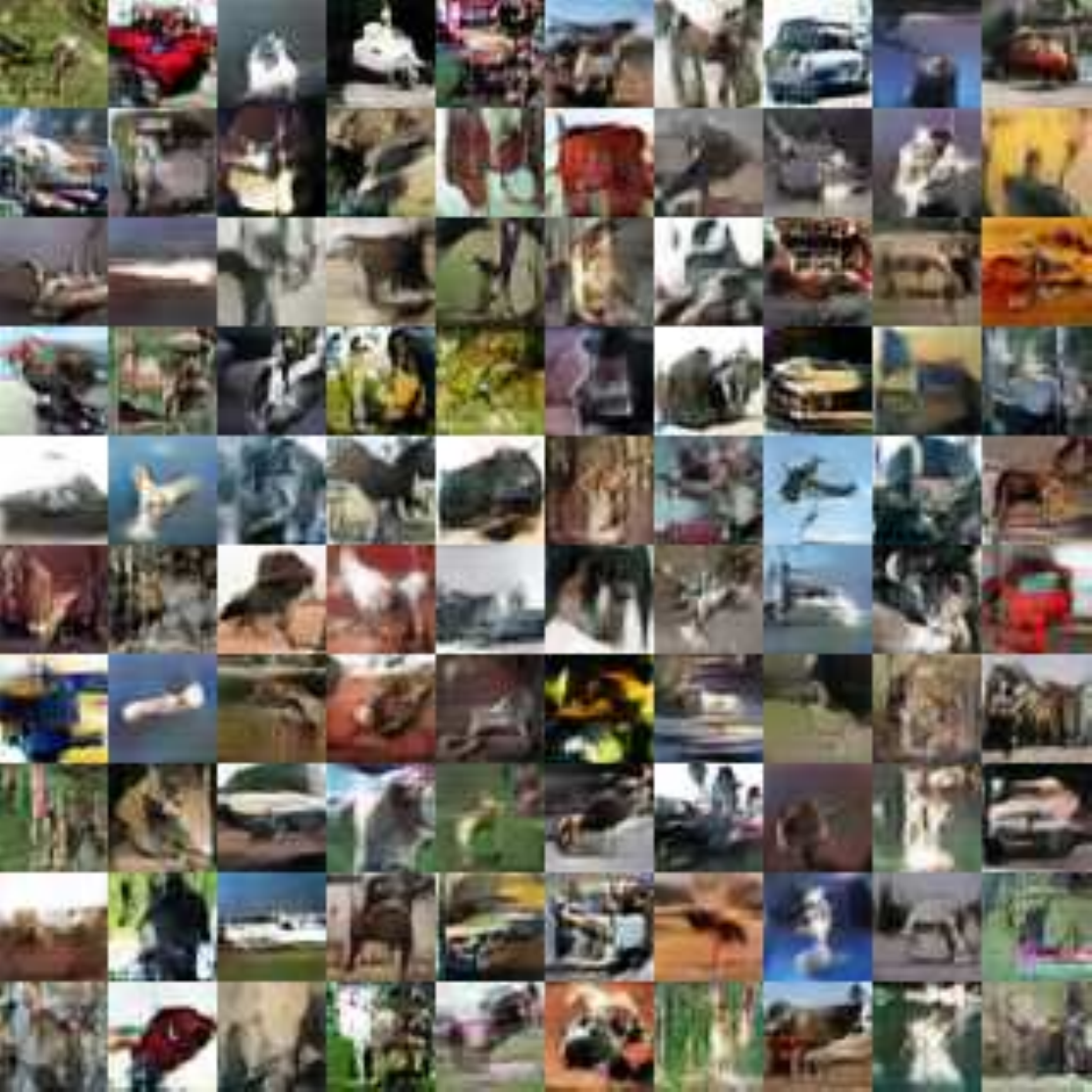}
    	\caption{Sample of images from Generator of Epoch $19$ trained via Optimistic Adam and 1:1 training ratio.}
\end{figure}
\begin{figure}[H]
        \centering
    		\includegraphics[height=4in]{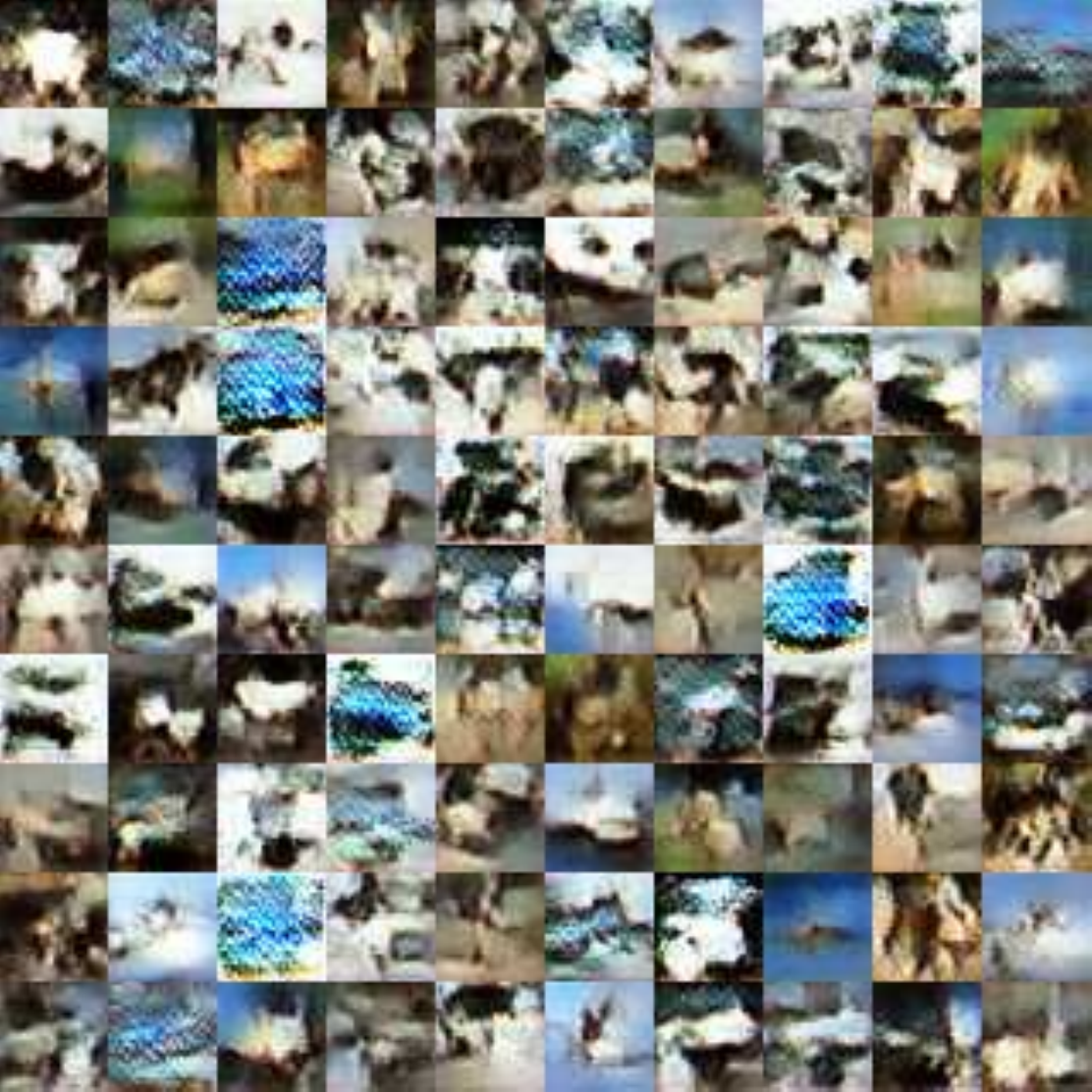}
    	\caption{Sample of images from Generator of Epoch $19$ trained via  Adam and 1:1 training ratio.}
\end{figure} 
\begin{figure}[H]
        \centering
    		\includegraphics[height=4in]{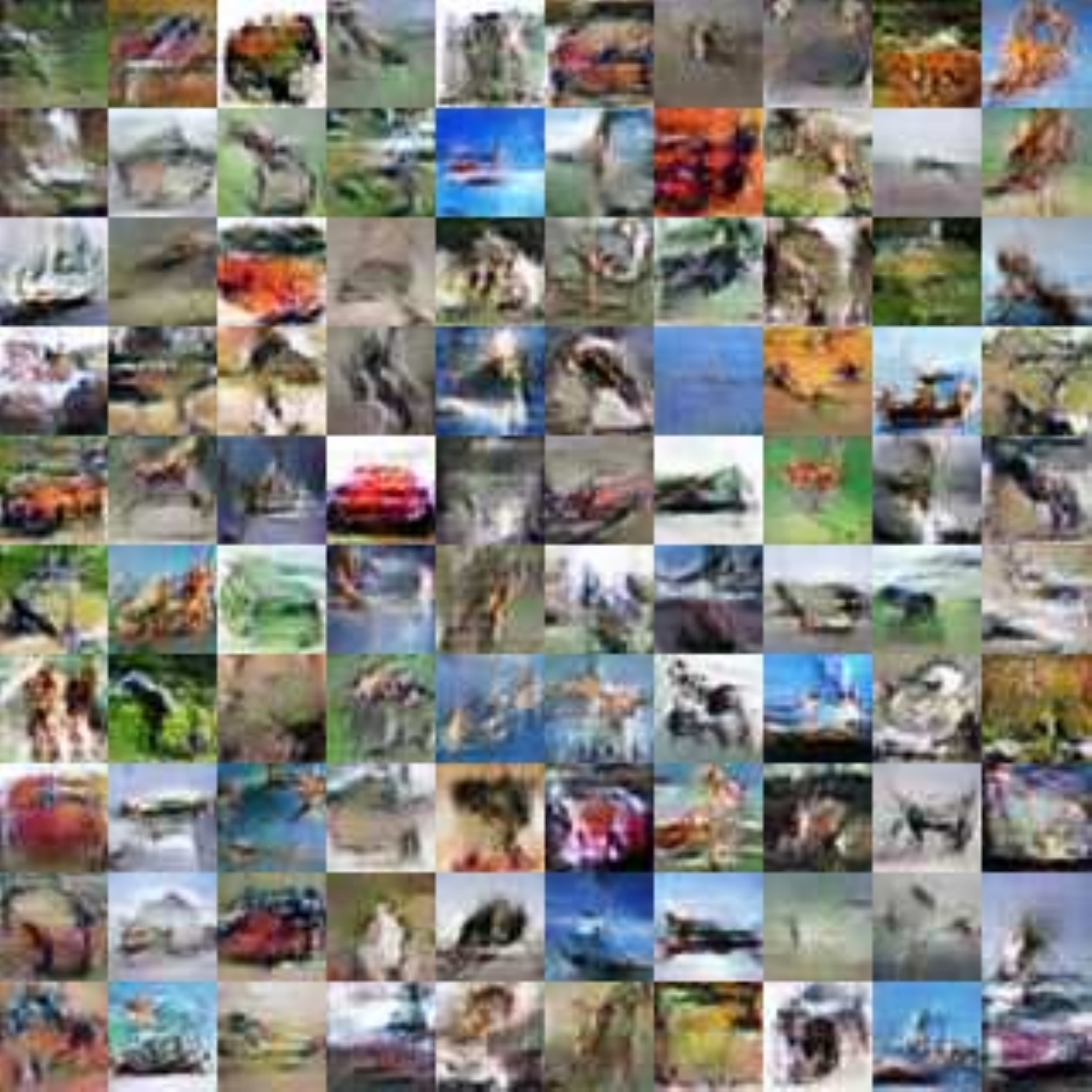}
    	\caption{Sample of images from Generator of Epoch $19$ trained via  Adam and 5:1 training ratio.}
\end{figure}

\section{CIFAR10 Adam vs. Optimistic Adam Comparison}\label{sec:appendix-errorbars}
\begin{figure}[H]
    \centering  
    \includegraphics[width=0.9\textwidth]{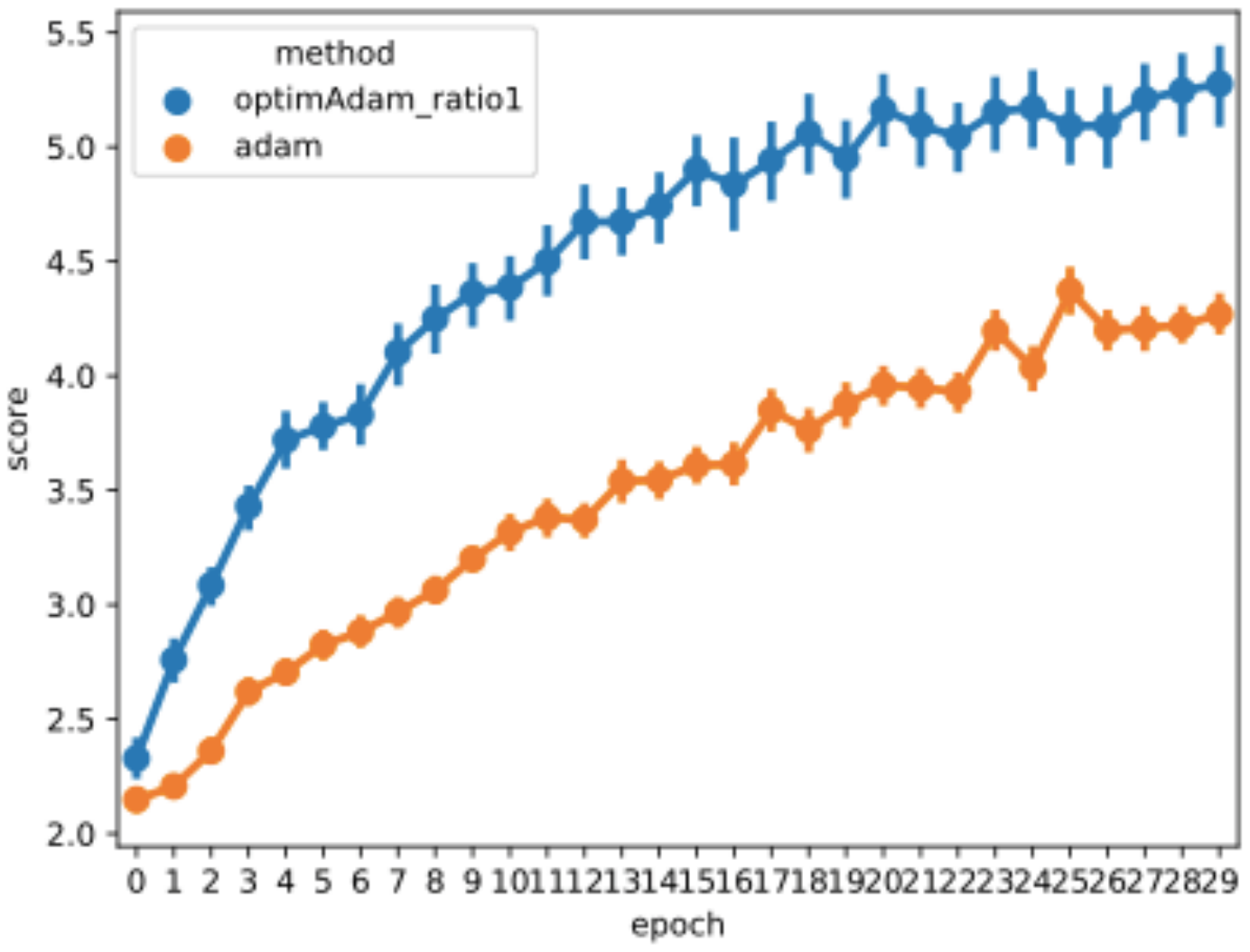}
    \caption{The inception scores across epochs for GANs trained with Optimistic Adam (ratio 1) and Adam (ratio 5) on CIFAR10 (the two top-performing optimizers found in Section~\ref{sec:cifar10}, with 10\%-90\% confidence intervals. The GANs were trained for 30 epochs and results gathered across 35 runs.}\label{fig:optimistic-Adam}
\end{figure}

\end{document}